\documentclass[letter,11pt]{article}

\usepackage[utf8]{inputenc}
\usepackage[T1]{fontenc}

\usepackage[a4paper,margin=1in]{geometry}

\usepackage[]{natbib}

\usepackage{url}            
\usepackage{booktabs}       
\usepackage{amsfonts}       
\usepackage{nicefrac}       
\usepackage{microtype}      

\usepackage[page,header]{appendix}
\usepackage{titletoc}

\usepackage{graphicx}
\usepackage{subcaption}

\usepackage[usenames,dvipsnames]{xcolor}

\definecolor{DarkRed}{rgb}{0.368,0.097,0.078}
\definecolor{DarkBlue}{rgb}{0.2,0.2,0.6}

\usepackage[colorlinks = true,
    linkcolor = DarkBlue,
    anchorcolor = DarkBlue,
    citecolor = DarkBlue,
    filecolor = DarkBlue,
    urlcolor = DarkRed
    ]{hyperref}

\usepackage{amsthm} 
\usepackage{mathtools,thmtools}
\usepackage{amsfonts,amsmath,amssymb}
\usepackage[capitalise]{cleveref}
\usepackage{times}
\usepackage{algorithm}
\usepackage{algpseudocode}
\usepackage{thm-restate}
\usepackage{tcolorbox}
\usepackage{lipsum}
\usepackage{enumitem}
\usepackage{bm}
\usepackage{xspace}
\usepackage{nicefrac}
\usepackage{dsfont}
\usepackage{float}
\usepackage{bbm}
\usepackage{mdframed}

\usepackage[color=green!25,prependcaption,textsize=tiny]{todonotes}

\usepackage{multirow}

\usepackage{array}

\usepackage{enumitem}
\usepackage{bm}
\usepackage{xspace}
\usepackage{nicefrac}

\usepackage{dsfont}

\declaretheoremstyle[
	    spaceabove=\topsep, 
	    spacebelow=\topsep, 
	    headfont=\normalfont\bfseries,
	    bodyfont=\normalfont\itshape,
	    notefont=\normalfont\bfseries,
	    notebraces={(}{)},
	    postheadspace=0.5em, 
	    headpunct={},
	    postfoothook=\noindent\ignorespaces
    ]{theorem}
\declaretheorem[style=theorem,numberwithin=section]{theorem}

\declaretheoremstyle[
	    spaceabove=\topsep, 
	    spacebelow=\topsep, 
	    headfont=\normalfont\bfseries,
	    bodyfont=\normalfont,
	    notefont=\normalfont\bfseries,
	    notebraces={(}{)},
	    postheadspace=0.5em, 
	    headpunct={},
	    postfoothook=\noindent\ignorespaces
    ]{definition}

\declaretheoremstyle[
        spaceabove=\topsep, 
        spacebelow=\topsep, 
        headfont=\normalfont\bfseries,
        bodyfont=\normalfont,
        notefont=\normalfont\bfseries,
        notebraces={}{},
        postheadspace=0.5em, 
        qed=$\blacksquare$, 
        headpunct={},
        postfoothook=\noindent\ignorespaces
    ]{proofstyle}
\declaretheorem[style=proofstyle,numbered=no,name=Proof]{proof}

\declaretheoremstyle[
        spaceabove=\topsep, 
        spacebelow=\topsep, 
        headfont=\normalfont\bfseries,
        bodyfont=\normalfont,
        notefont=\normalfont\bfseries,
        notebraces={}{},
        postheadspace=0.5em, 
        qed=$\blacksquare$, 
        headpunct={},
        postfoothook=\noindent\ignorespaces
    ]{proofstyle}
\declaretheorem[style=proofstyle,numbered=no,name=Proof sketch]{proofsketch}

\declaretheorem[style=theorem,sibling=theorem,name=Lemma]{lemma}
\declaretheorem[style=theorem,sibling=theorem,name=Corollary]{corollary}

\declaretheorem[style=theorem,numbered=no,name=Theorem]{theorem*}
\declaretheorem[style=theorem,numbered=no,name=Lemma]{lemma*}
\declaretheorem[style=theorem,numbered=no,name=Corollary]{corollary*}
\declaretheorem[style=theorem,numbered=no,name=Proposition]{proposition*}
\declaretheorem[style=theorem,numbered=no,name=Claim]{claim*}
\declaretheorem[style=theorem,numbered=no,name=Fact]{fact*}
\declaretheorem[style=theorem,numbered=no,name=Observation]{observation*}
\declaretheorem[style=theorem,numbered=no,name=Conjecture]{conjecture*}

\declaretheorem[style=definition,sibling=theorem,name=Definition]{definition}

\declaretheorem[style=definition,numbered=no,name=Definition]{definition*}
\declaretheorem[style=definition,numbered=no,name=Remark]{remark*}
\declaretheorem[style=definition,numbered=no,name=Example]{example*}
\declaretheorem[style=definition,numbered=no,name=Question]{question*}

\DeclareMathAlphabet{\mathbfsf}{\encodingdefault}{\sfdefault}{bx}{n}

\let\Pr\relax
\DeclareMathOperator{\Pr}{\mathbb{P}}

\newcommand{\lr}[1]{\mathopen{}\left(#1\right)}

\newcommand{\lrbra}[1]{\mathopen{}\left[#1\right]}

\newcommand{\lrset}[1]{\mathopen{}\left\{#1\right\}}

\usepackage{amsfonts,amsmath,amssymb}
\usepackage{mathtools}
\usepackage{float}
\usepackage{enumitem}
\usepackage{algorithm}
\usepackage{algpseudocode}

\newcommand{\bigO}[1]{\mathcal{O}\left(#1\right)}

\newcommand{\beq}{\begin{eqnarray*}}
\newcommand{\eeq}{\end{eqnarray*}}
\newcommand{\beqn}{\begin{eqnarray}}
\newcommand{\eeqn}{\end{eqnarray}}

\newcommand{\argmin}{\mathop{\mathrm{argmin}}}

\newcommand{\vc}{d_\mathrm{VC}}
\newcommand{\lit}{d_\mathrm{LD}}

\usetikzlibrary{decorations.pathreplacing}

\newcommand{\ar}[1]{\textbf{\color{orange}[Arvind: #1]}}
\newcommand{\idan}[1]{\textbf{\color{purple}[Idan: #1]}}
\renewcommand{\todo}[1]{\textbf{\color{violet}[TODO: #1]}}

\renewcommand{\ar}[1]{}
\renewcommand{\idan}[1]{}
\renewcommand{\todo}[1]{}

\title{Regret–Oracle Complexity Tradeoffs in Agnostic Online Learning}

\author{
    Idan Attias \thanks{Institute for Data, Econometrics, Algorithms, and Learning (IDEAL), hosted by UIC and TTIC; \texttt{idanattias88@gmail.com}.} 
    \and  Steve Hanneke\thanks{Department of Computer Science, Purdue University; \texttt{steve.hanneke@gmail.com.}}
    \and  Arvind Ramaswami\thanks{Department of Computer Science, Purdue University; \texttt{ramaswa4@purdue.edu.}}
}

\begin{document}
\maketitle

\begin{abstract}
Agnostic online learning is classically solved via a reduction to the realizable setting, utilizing Littlestone's Standard Optimal Algorithm (SOA) as a base learner. However, the SOA is computationally intractable to execute even for a single round. To overcome this barrier, recent work in oracle-efficient online learning replaces the SOA with a realizable base learner that accesses the concept class exclusively through an offline empirical risk minimization (ERM) oracle. While such agnostic learners achieve near-optimal expected regret, they suffer from a doubly-exponential oracle complexity of $\mathcal{O}\big(T^{2^{\mathcal{O}(\lit)}}\big)$, where $\lit$ is the Littlestone dimension and $T$ is the number of rounds. 
In this work, we significantly improve this oracle complexity while relying on an even weaker primitive: a weak-consistency oracle, which merely decides whether a given labeled dataset is realizable. At the core of our approach is an adaptive and dynamic agnostic-to-realizable reduction that actively prunes non-realizable label sequences on the fly. By using the VC dimension ($\vc$) to bound the number of dynamically maintained active paths, our algorithm reduces the total query complexity down to $\mathcal{O}(T^{\vc+1})$ while perfectly preserving near-optimal expected regret. Crucially, this dynamic pruning also yields a memory reduction over the standard reduction. Furthermore, we formally quantify the regret--oracle complexity tradeoff, providing upper bounds that smoothly interpolate between restricted query budgets and attainable expected regret. We complement these with lower bounds proving that any learner restricted to $Q = o(\sqrt{T})$ queries must suffer an expected regret of $\Omega(T/Q)$.
\end{abstract}

\section{Introduction}
Online learning is a fundamental framework for sequential prediction. On each round, the learner observes an instance, predicts a label, and then observes the true label, incurring a loss. The primary goal is to minimize regret: the learner's excess cumulative loss relative to the best concept in the class in hindsight \cite{cesa2006prediction,shalev2012online,mohri2012foundations,shalev2014understanding}. This framework captures a broad range of learning problems and has become a central tool in optimization, game theory, and sequential decision-making.

A growing line of work investigates whether online learning can be implemented efficiently using only offline optimization primitives. Offline-oracle models have successfully yielded computationally efficient algorithms across a broad range of decision-making problems \cite{kalai2005efficient,dudik2011efficient,agarwal2014taming,syrgkanis2016efficient,hazan2016computational,foster2020beyond,dudik2020oracle,simchi2022bypassing,haghtalab2022oracle,haghtalab2024smoothed,block2024performance}. In such models, the learner does not manipulate the concept class directly; instead, it interacts with the class through an oracle, typically an empirical risk minimization (ERM) oracle. This perspective is particularly natural, as ERM is the foundational optimization primitive underlying much of statistical learning\footnote{For a comprehensive literature review, see \Cref{app:additional-related-work}.}.

We focus on standard binary online classification with the $0$-$1$ loss over a general, possibly infinite, concept class $\mathcal{C}$. In the realizable setting, Littlestone's Standard Optimal Algorithm (SOA) perfectly characterizes online learnability via the Littlestone dimension, $\lit$ \cite{littlestone1988learning}. In the agnostic setting, where the sequence of labels may be arbitrary, Ben-David et al.\ \cite{ben2009agnostic} established a regret bound of $\mathcal{O}(\sqrt{T\lit\log T})$ by reducing the agnostic problem to the realizable one using the SOA as a subroutine, where $T$ is the number of rounds. Alon et al.\ \cite{alon2021adversarial} later proved a tight optimal regret bound of $\mathcal{O}(\sqrt{T\lit})$, highlighting the fact that the Littlestone dimension definitively characterizes agnostic online learnability.

While these algorithms achieve optimal regret, they are computationally intractable in general. Executing the SOA requires repeatedly computing the Littlestone dimension of restricted subclasses, a task known to be intractable to approximate within a constant factor even when the concept class and the set of features are finite \cite{frances1998optimal,manurangsi2017inapproximability,manurangsi2022improved}. Moreover, Hasrati and Ben-David \cite{hasrati2023computable} demonstrated that there exist recursively enumerable representable classes with a finite Littlestone dimension for which no computable SOA exists at all. This computational barrier motivates the oracle-efficient framework formalized by Assos et al.\ \cite{assos2023online} and Kozachinskiy and Steifer \cite{kozachinskiy2024simple}. In this model, the online learner accesses the concept class exclusively through an offline ERM oracle: given a labeled dataset, the oracle returns a concept minimizing the empirical error.

The central question is therefore: 
\emph{What regret is achievable in agnostic online learning when the learner is restricted to offline oracle access, and how many oracle calls are necessary?} In the realizable setting, an exponential mistake bound in $\lit$ is attainable using ERM access, and such an exponential dependence is unavoidable for ERM oracles \cite{assos2023online,attias2025tradeoffs,kozachinskiy2024simple}. Translating this to the agnostic setting, prior reductions achieve near-optimal regret $\tilde{\mathcal{O}}\big(\sqrt{T\cdot 2^{\mathcal{O}(\lit)}}\big)$, but suffer from a staggering oracle query complexity of $\mathcal{O}\big(T^{2^{\mathcal{O}(\lit)}}\big)$. While originally analyzed with a full ERM oracle, the core mechanisms of these reductions can be adapted to use a \emph{weak consistency oracle}\footnote{In \cite{attias2025tradeoffs}, the algorithm relying on a weak consistency oracle is proven for the realizable case but can be adapted to the agnostic setting as well.} \cite{attias2025tradeoffs}. This weaker primitive merely decides whether a labeled sample is realizable (i.e., it solves a decision problem rather than an optimization one), and has recently been shown to suffice for efficient PAC learning without requiring the full power of ERM \cite{daskalakis2024efficient}. 

Our work demonstrates that this doubly-exponential oracle complexity is not inherent. The key structural observation is that standard agnostic-to-realizable reduction obliviously maintains a multitude of pseudo-label histories that are strictly impossible for the underlying class $\mathcal{C}$ to produce. To overcome this, we introduce an adaptive and dynamic reduction that continuously issues weak-consistency queries to prune unrealizable histories the exact moment they arise. By Sauer's Lemma, the number of realizable labelings of the first $t$ observed instances is strictly bounded by $\mathcal{O}(t^{\vc})$, where $\vc$ is the VC dimension.
This shows that while the Littlestone dimension dictates the \emph{information-theoretic price} (regret) of online classification, it does not dictate the \emph{computational cost}, which can be bounded by an arbitrarily smaller quantity: the VC dimension.
\paragraph{Our Contributions.} We summarize our main contributions in \Cref{fig:pareto}, mapping the Pareto frontier between the total number of weak-consistency oracle (\Cref{def:oracle-types}) queries and the attainable expected regret, compared to previously known results. Our technical contributions are as follows:
\vspace{-2pt}
\begin{enumerate}[leftmargin=*,itemsep=0.5pt]

\item \textbf{A general agnostic-to-realizable reduction via dynamic pruning (\Cref{sec:adept}).}
We design a novel reduction (\Cref{alg:active_pruned}) that seamlessly wraps any deterministic realizable base learner having a mistake bound $M$. Instead of suffering the computational blowup caused by explicitly maintaining all possible mistake schedules $\mathcal{O}(T^M)$ \cite{ben2009agnostic}, our dynamic-pruning mechanism reduces the active state space, maintaining at most $\mathcal{O}(t^{\vc})$ active paths per round $t$. This structural improvement yields three immediate consequences:
\begin{itemize}[leftmargin=12pt,itemsep=2pt,topsep=2pt]
    \item \emph{Total Oracle Efficiency:} When instantiated with existing oracle-efficient base learners ($M = 2^{\mathcal{O}(\lit)}$), our reduction matches their near-optimal expected regret of $\tilde{\mathcal{O}}\big(\sqrt{T \cdot 2^{\mathcal{O}(\lit)}}\big)$. Crucially, it improves the total query complexity from $\mathcal{O}\big(T^{2^{\mathcal{O}(\lit)}}\big)$ required by prior work down to just $\mathcal{O}(T^{\vc+1})$. Since $\vc \le \lit$ (and can be arbitrarily smaller), this constitutes an exponential to double-exponential reduction in dimension dependence.
    
    \item \emph{Improved Memory Bound:} Even independently of oracle restrictions, when wrapping the Standard Optimal Algorithm ($M = \lit$), our reduction achieves the minimax optimal regret of $\tilde{\mathcal{O}}(\sqrt{T\lit})$. While the standard reduction must explicitly track a tree of $\mathcal{O}(T^{\lit})$ experts, our active pruning compresses the memory by maintaining only $\mathcal{O}(t^{\vc})$ paths per round $t$.
    
    \item \emph{Instance-Dependent Bound:} By adaptively tuning the learning rate based on the empirically best active path, we achieve a first-order expected regret bound of $\mathcal{O}\big(\sqrt{L^* M \log T} + M \log T\big)$, where $L^*$ is the true optimal loss in hindsight. This interpolates between the fully adversarial regime and the realizable setting.
\end{itemize}

\item \textbf{The regret--query tradeoff (\Cref{sec:tradeoffs}).}
We formally map the tradeoff between the available oracle query budget and the attainable expected regret. We prove that for any interpolation parameter $c\in(0,1)$, there exists an online learner requiring at most $\mathcal{O}(T^{1+c\vc})$ total oracle queries while guaranteeing an expected regret bounded by $\mathcal{O}\big(T^{1-c/2}\sqrt{\lit+M\log T}\big)$. This smoothly interpolates between the low-query regime, yielding larger but still strictly sublinear regret, and the high-query regime, where our main reduction achieves near-optimal regret.

\item \textbf{A query-budget lower bound (\Cref{sec:lower-bound}).}
We prove that a heavy dependence on the oracle budget is information-theoretically unavoidable in the low-query regime. Even for elementary concept classes with Littlestone dimension $\lit=1$, any learner restricted to $Q=o(\sqrt{T})$ queries to an ERM oracle must suffer an expected regret of $\Omega(T/Q)$. This strictly improves upon prior lower bounds \cite{attias2025tradeoffs} in the constrained $o(\sqrt{T})$ query regime, formally showing that any constant-query learner must suffer linear regret (as opposed to $\sqrt{T}$ in previous work).
\end{enumerate}

\begin{figure}[ht]
\centering
\resizebox{1.0\linewidth}{!}{
\begin{tikzpicture}[x=1.0cm, y=1.2cm, >=stealth, font=\small]

    \colorlet{ourcolor}{green!50!black} 
    \colorlet{prevcolor}{orange!80!black}
    \colorlet{fillcolor}{gray!15}
    \colorlet{gapcolor}{yellow!15} 
    
    \colorlet{achievecolor}{green!10} 
    
    \colorlet{highlightcolor}{blue!80!black} 

    \fill[fillcolor] (0, 0) -- (0, 5) -- (2.5, 2.5) -- (12.5, 2.5) -- (12.5, 0) -- cycle;
    
    \fill[gapcolor] (0, 5) -- (5, 5) -- (8.5, 2.5) -- (2.5, 2.5) -- cycle;

    \fill[achievecolor] (5, 5) -- (12.5, 5) -- (12.5, 2.5) -- (8.5, 2.5) -- cycle;

    \node[text=gray!70!black, font=\bfseries, align=center] at (4.5, 1.0) {Impossible Region\\(Lower Bounds)};
    
    \node[text=gray!70!black, font=\bfseries, align=center] (gaptext) at (3.5, 5.6) {Open Gap\\(Unknown Regime)};
    \draw[->, thick, gray!70!black, shorten >=2pt] (gaptext.south) -- (3.5, 4.4);

    \node[text=gray!70!black, font=\bfseries, align=center] at (10.5, 4.0) {Achievable Region\\(Upper Bounds)};

    \draw[gray!30, dashed, thick] (2.5, 0) -- (2.5, 5.0); 
    \draw[gray!30, dashed, thick] (0, 2.5) -- (12.5, 2.5); 

    \draw[->, thick, draw=black!80] (0, 0) -- (13.0, 0) node[right, align=left] {\textbf{Total Oracle Queries $Q$}};
    \draw[->, thick, draw=black!80] (0, 0) -- (0, 6.3) node[above left, align=right] {\textbf{Expected Regret}};

    \draw[thick] (2.5, 0.1) -- (2.5, -0.1) node[below=4pt] {$\mathcal{O}(\sqrt{T})$};
    \draw[thick] (5, 0.1) -- (5, -0.1) node[below=4pt] {$\mathcal{O}(T)$};
    \draw[thick] (8.5, 0.1) -- (8.5, -0.1) node[below=4pt, text=ourcolor, font=\bfseries] {$\mathcal{O}(T^{\vc+1})$};
    \draw[thick] (11.5, 0.1) -- (11.5, -0.1) node[below=4pt, text=prevcolor, font=\bfseries] {$\mathcal{O}\big(T^{2^{\mathcal{O}(\lit)}}\big)$};

    \draw[thick] (0.1, 2.5) -- (-0.1, 2.5) node[left=4pt] {$\tilde{\mathcal{O}}\big(\sqrt{T \cdot 2^{\mathcal{O}(\lit)}}\big)$};
    \draw[thick] (0.1, 5) -- (-0.1, 5) node[left=4pt] {$\mathcal{O}(T)$};

    
    \draw[ultra thick, ourcolor, densely dashed] (0, 5) -- (2.5, 2.5);
    \draw[ourcolor, densely dashed, opacity=0.4] (2.5, 2.5) -- (5, 0); 
    \node[ourcolor, anchor=west, align=center, fill=white, rounded corners=2pt, inner sep=2pt] at (0.2, 3.8) {\textbf{Query-Budget Lower Bound}\\$\Omega(T/Q)$\\(\Cref{thm:const-lower-bound})};

    \draw[ultra thick, prevcolor, densely dashed] (2.5, 2.5) -- (12.5, 2.5);
    \draw[prevcolor, densely dashed, opacity=0.4] (0, 2.5) -- (2.5, 2.5); 
    \node[prevcolor, anchor=north, align=center, fill=fillcolor, rounded corners=2pt, inner sep=2pt] at (6.0, 2.4) {\textbf{Prior Lower Bound} \\$\Omega\big(\sqrt{T \cdot 2^{\lit}}\big)$\\
    \cite{attias2025tradeoffs}};

    
    \draw[ultra thick, ourcolor] (5, 5) -- (8.5, 2.5);
    \node[ourcolor, anchor=south, align=center, fill=white, rounded corners=2pt, inner sep=2pt] at (6.75, 3.85) {\textbf{Tradeoff Frontier}\\(\Cref{thm:pareto})};

    \filldraw[ourcolor] (8.5, 2.5) circle (3.5pt);
    \node[ourcolor, anchor=south, align=center, fill=white, rounded corners=2pt, inner sep=2pt] at (8.5, 2.7) {\textbf{\Cref{alg:active_pruned}}\\(\Cref{cor:oracle_efficiency})};

    \filldraw[prevcolor] (11.5, 2.5) circle (3.5pt);
    \node[prevcolor, anchor=south, align=center, fill=white, rounded corners=2pt, inner sep=2pt] at (11.5, 2.7) {\textbf{Prior Upper Bound}\\\cite{attias2025tradeoffs,assos2023online}};

    
    \draw[<-, ultra thick, highlightcolor] (8.8, 1.8) to[bend right=12] (11.2, 1.8);
    \node[text=highlightcolor, anchor=north, align=center] at (10.0, 1.6) {\textbf{Exp. / Double-Exp. Query Reduction}\\(Maintains identical near-optimal regret)};

\end{tikzpicture}
}
\caption{\textbf{Expected regret vs. total oracle queries tradeoff}. Our results are marked in green, whereas previous results appear in orange. 
The yellow region highlights an intriguing open question. We believe that closing this gap will require fundamentally new ideas, namely, a new algorithm for agnostic online learning that does not rely on this reduction and can be implemented efficiently using oracle calls.}
\label{fig:pareto}
\end{figure}

\section{The Learning Model: Agnostic Online Learning with Offline Oracle Access}\label{sec:learning-model}
We start with a definition of the online learning model where the interaction of the learner with the concept class is done only through an oracle. This is in contrast to the standard online model, where the learner knows the concept class in advance. 
The learning protocol is a sequential game between a learner and an adversary. 
    Let $\mathcal{C} \subset \lrset{0,1}^\mathcal{X}$ be a concept class, where $\mathcal{X}$ is the instance space and $\lrset{0,1}$ is the label space. Let $\mathcal{F}$ be a family of concept classes, known to the learner (e.g., classes with finite Littlestone dimension $\lit$), such that $\mathcal{C}$ is an unknown concept class from $\mathcal{F}$ chosen by the adversary. Suppose the learner only has oracle access to $\mathcal{C}$ via an oracle $\mathcal{O}$.  
    The sequential game proceeds for $T$ rounds, as follows. For each $t \in [T]$:
    \begin{enumerate}
        \item The adversary chooses $(x_t,y_t)\in \mathcal{X}\times\{0,1\}$.
        \item The learner observes $x_t$, picks a distribution $\Delta_t\in\Delta(\lrset{0,1})$, and predicts $\hat{y}_t \sim \Delta_t$.
        \item The adversary reveals $y_t \in \{0,1\}$ and the learner suffers a loss $\mathbb{I}\lrbra{\hat{y}_t\neq y_t}$.
    \end{enumerate}
Let $H_{t-1}$ denote the history (transcript) prior to round $t$, including all past examples, predictions, and oracle queries and responses. Unless stated otherwise, the adversary is adaptive: it may choose $(x_t, y_t)$ as a possibly randomized function of $H_{t-1}$, independent of the learner's internal randomness for round $t$. At the start of the round, only the instance $x_t$ is revealed to the learner.

No realizability assumption is made in the agnostic setting. The regret of algorithm $\mathcal A$ with respect to $\mathcal C$ is defined as
\[
    \mathrm{Reg}_T(\mathcal A;\mathcal C)
    =
    \sum_{t=1}^T \mathbbm{1}[\hat y_t\neq y_t]
    -
    \min_{c\in\mathcal C}\sum_{t=1}^T \mathbbm{1}[c(x_t)\neq y_t].
\]
For adaptive adversaries, both terms are evaluated on the transcript before taking expectation. Thus an adaptive expected-regret guarantee $R_T$ means
\[
    \sup_{\mathcal C\in\mathcal F}\sup_{\mathrm{Adv}\in\mathfrak A_{\mathrm{ad}}}
    \mathbb E\left[\mathrm{Reg}_T(\mathcal A;\mathcal C)\right]
    \le R_T,
\]
where $\mathfrak A_{\mathrm{ad}}$ denotes the class of adaptive adversaries described above, and 
the expectation is with respect to both the learner's randomness and any randomness of the adversary.

Let $N_T(\mathcal A)$ be the (random) total number of oracle calls that $\mathcal A$ makes to $\mathcal O(\mathcal C)$ over rounds $1,\ldots,T$. We use $N_T$ for this realized query count and
\[
    Q_T(\mathcal A)
    =
    \sup_{\mathcal C\in\mathcal F}\sup_{\mathrm{Adv}\in\mathfrak A_{\mathrm{ad}}}
    \mathbb E\left[N_T(\mathcal A)\right]
\]
for the worst-case expected query complexity.
\begin{definition}[Oracle Types]\label{def:oracle-types}
We define the two oracles that are used in this paper. Let $\mathcal{C}$ be the concept class we have access to.
\begin{itemize}[leftmargin=*]
    \item \underline{``Weak Consistency''}: Given any sequence $S=((x_i,y_i))_{i=1}^n\in(\mathcal X\times\lrset{0,1})^n$, the Weak Consistency Oracle returns ``realizable" if there exists some $c \in \mathcal{C}$ consistent with $S$. Otherwise, it returns ``not realizable". This is the oracle used for the upper bounds in this paper (\Cref{thm:main}, \Cref{thm:pareto}).
    \item \underline{(``Agnostic") ERM}: Given any sequence $S=((x_i,y_i))_{i=1}^n\in(\mathcal X\times\lrset{0,1})^n$, the Agnostic ERM Oracle returns a a concept with the minimal error: $\argmin_{c\in \mathcal{C}}\sum_{(x,y)\in S}\mathbb{I}\lrbra{c(x)\neq y}$.
    This oracle is used for the lower bound in \Cref{thm:const-lower-bound}.
\end{itemize} 
\end{definition}

Other variants of ERM oracles were defined in \cite{attias2025tradeoffs}, such as a restricted ERM oracle (which is applied only to the observed data throughout the interaction) and a realizable ERM oracle (which returns a concept only for realizable datasets), neither of which is used in our work. Note that our lower bound holds for the strongest form of such an ERM oracle. See \cite{attias2025tradeoffs} for the precise definitions of these oracle models. Other terminology (VC dimension and Littlestone dimension) is defined in \Cref{app:preliminaries}.

\section{Background: Agnostic to Realizable Reduction and its Suboptimality}
\label{sec:background}

Before presenting our oracle-efficient algorithm, we rigorously formalize the classic information-theoretic reduction from agnostic to realizable online learning established by \cite{ben2009agnostic} (see \Cref{alg:bdpss}). Detailing this algorithm is essential for isolating the source of its computational barrier and for clarifying how the weak consistency oracle enables us to perfectly simulate and prune unrealizable experts in the construction.

Let $\mathcal{A}$ denote a deterministic algorithm that makes at most $M$ mistakes on any sequence of data strictly realizable by some $c \in \mathcal{C}$.
The reduction treats $\mathcal{A}$ as a black box and simulates a massive ensemble of online ``experts.'' The goal of each expert is to perfectly output the sequence of labels generated by the unknown optimal concept $c^* \in \mathcal{C}$. Because $c^*$ belongs to $\mathcal{C}$, its label sequence is strictly realizable. Thus, if the base learner $\mathcal{A}$ were to learn from the labels of $c^*$, it would be guaranteed to make at most $M$ prediction mistakes. Each expert attempts to perfectly simulate $c^*$ by guessing the exact subset of rounds where these mistakes occur.

Formally, an expert is uniquely parameterized by a \textit{mistake schedule}: a binary vector $I \in \{0,1\}^T$ subject to the sparsity constraint $\|I\|_1 \le M$. The $t$-th coordinate of this vector, $I_t \in \{0,1\}$, acts as an instruction for the expert at round $t$: if $I_t = 1$, the expert assumes its internal base learner $\mathcal{A}_I$ is about to make a mistake (i.e., disagree with $c^*$), and forcibly flips its prediction. The total number of valid experts is $N = \sum_{j=0}^M \binom{T}{j} = \mathcal{O}(T^M)$.

\begin{algorithm}[ht]
\caption{Agnostic-to-Realizable Reduction \cite{ben2009agnostic}}
\label{alg:bdpss}
\textbf{Subroutines:} Base learner $\mathcal{A}$ for the realizable setting that makes at most $M$ mistakes. 
\\
\textbf{Initialize:} \textbf{(i)} A uniform probability distribution over all valid mistake schedules. Let $N = \sum_{j=0}^M \binom{T}{j}$ be the total number of valid experts. For all $I \in \{0,1\}^T$ such that $\|I\|_1 \le M$, set the initial weight $w_0(I) = \frac{1}{N}$. \textbf{(ii)} Initialize a separate instance of the base learner $\mathcal{A}_I$ for each expert. \textbf{(iii)} Let $\eta$ denote the learning rate.
\vspace{0.2cm}

\textbf{For} $t = 1, 2, \dots, T$:
\begin{enumerate}
    \item Receive $x_t \in \mathcal{X}$.
    \item \textbf{Expert Predictions.} For each schedule $I$:
    \begin{enumerate}
        \item Query internal base learner to get prediction $p_t(I) = \mathcal{A}_I(x_t)$.
        \item Generate pseudo-label: $v_t(I) = p_t(I) \oplus I_t$. \\
        (If $I_t = 1$, the expert assumes $\mathcal{A}_I$'s prediction is wrong and flips it).
    \end{enumerate}
    \item \textbf{Predict:} Run Hedge (Exponential Weights) over the experts. Predict $\hat{y}_t = 1$ with probability proportional to the sum of weights of experts predicting $1$:
    $P_t(1) = \frac{\sum_{I : v_t(I) = 1} w_{t-1}(I)}{\sum_{I} w_{t-1}(I)}$
    \item \textbf{Update:} Observe true label $y_t$. 
    \begin{enumerate}
        \item Update weights for each $I$: $w_t(I) = w_{t-1}(I) \cdot \exp(-\eta \cdot \mathbb{I}[v_t(I) \neq y_t])$.
        \item Advance base learners: Feed $(x_t, v_t(I))$ to $\mathcal{A}_I$ as the ``true'' label.
    \end{enumerate}
\end{enumerate}
\end{algorithm}

To play the agnostic game, the algorithm runs Hedge over this entire pool of experts. The statistical guarantee of this reduction relies on the survival of the \emph{Optimal Expert}. For the optimal hypothesis $c^* \in \mathcal{C}$ in hindsight, let $I^*$ be the exact rounds where $\mathcal{A}$ disagrees with $c^*(x_t)$. Since tracking $c^*$ generates the strictly realizable sequence $v^* = (c^*(x_1), \dots, c^*(x_T))$, the base learner is guaranteed to make $\le M$ mistakes. Thus, $\|I^*\|_1 \le M$. The expert parameterized by $I^*$ flawlessly tracks $c^*$, guaranteeing an expected agnostic regret of $\mathcal{O}(\sqrt{MT\log(T)})$. The base learner is chosen to be the Standard Optimal Algorithm (SOA), which achieves the optimal mistake bound of $\lit$ in the realizable setting, and the resulting agnostic regret bound is near-optimal.

\paragraph{Bottleneck 1: unrealizable experts and Sauer's Lemma.}
The reduction is oblivious to the structure of the concept class $\mathcal{C}$. Consequently, it maintains every pseudo-label sequence $v_{1:t}$ containing at most $\lit$ mistakes, each corresponding to a labeling (dichotomy) over the observed points $x_1,\dots,x_t$. By Sauer's Lemma, the number of realizable dichotomies induced by $\mathcal{C}$ on any set of $t$ points is bounded by $\mathcal{O}(t^{\vc})$.
This means that plenty of the $\mathcal{O}(T^{\lit})$ sequences tracked by the standard reduction are \emph{unrealizable experts} -- labelings that could never be produced by any actual hypothesis in the class.

\paragraph{Bottleneck 2: SOA intractability and large oracle complexity.}
A second fundamental limitation lies in the computational intractability of the SOA itself. Executing the SOA requires computing the Littlestone dimension at each round, a task known to be computationally hard even when the concept class and feature space are finite \citep{manurangsi2017inapproximability,manurangsi2022improved}. Consequently, a natural computational model relying solely on calls to an offline Empirical Risk Minimization (ERM) oracle has been proposed \citep{assos2023online,kozachinskiy2024simple,attias2025tradeoffs}. 
For such oracle-based learners, the best achievable realizable mistake bound is ${\Omega}(2^{\lit})$, and an expected regret of ${\Omega}(\sqrt{T 2^{\lit}})$ in the agnostic case \citep{attias2025tradeoffs}. However, achieving this regret using the reduction in \Cref{alg:bdpss} requires maintaining $\mathcal{O}(T^{2^{\lit}})$ active experts, which inherently demands an intractable $\mathcal{O}(T^{2^{\mathcal{O}(\lit)}})$ ERM oracle calls per round \citep{assos2023online}. Our main goal is to reduce significantly this oracle complexity.

\paragraph{Bypassing the bottlenecks.}
This establishes the core motivation of our work. By utilizing an offline ERM oracle, or formally, a strictly weaker \emph{weak consistency oracle} (\Cref{def:oracle-types}), to actively verify realizability and dynamically prune impossible dichotomies, we bypass both bottlenecks simultaneously. First, at any round $t$, we collapse the active memory footprint of the reduction from $\mathcal{O}(T^{\lit})$ down to the true capacity of the class, $\mathcal{O}(t^{\vc})$\footnote{Here and throughout, this memory comparison refers to the number of maintained high-level expert states, equivalently active realizable pseudo-label prefixes; the raw implementation memory is obtained by multiplying this active-state count by the per-prefix state size of the chosen realizable learner, as captured by $S_{\mathcal A}(t)$ in \Cref{thm:main}.}. Second, when using an oracle-efficient realizable base learner, this active pruning reduces the total oracle complexity from $\mathcal{O}\big(T^{2^{\mathcal{O}(\lit)}}\big)$ down to $\mathcal{O}\left(T^{\vc+1}\right)$.


\section{Adaptive Dynamic Expert Pruning Tree}\label{sec:adept}
In this section, we demonstrate that if the learner is granted access to an offline optimization oracle that checks whether a dataset is strictly realizable in $\mathcal{C}$, namely a weak consistency oracle, the double-exponential tracking tree of \Cref{alg:bdpss} can be significantly improved. This leads to a substantial reduction in the total number of offline oracle calls compared to \cite{assos2023online}, by using even a weaker oracle, and also shows that the memory requirements of the general reduction can be improved.

\paragraph{Algorithm Overview.}
Our \texttt{ADEPT} reduction (\Cref{alg:active_pruned}) bypasses the $\mathcal{O}(T^M)$ combinatorial explosion by shifting the tracking mechanism from abstract mistake schedules to explicitly tracking valid label prefixes. At each round $t$, we propose binary extensions $u = v \circ b$ for each active parent prefix $v \in V_{t-1}$ and apply two filtering mechanisms to shrink the state space. 
First, \emph{realizability pruning} queries the weak consistency oracle: if a sequence $u$ contradicts the concept class $\mathcal{C}$, it evaluates to \texttt{UNREALIZABLE} and is discarded. By Sauer's Lemma, this dynamically bounds the active tree width to $\mathcal{O}(t^{\vc})$ paths. Second, \emph{mistake pruning} simulates the internal base learner $\mathcal{A}$ along each surviving path. If a sequence exhausts the learner's guaranteed mistake budget (i.e., $k(u) > M$), it is permanently pruned.
Crucially, because the true optimal hypothesis $c^* \in \mathcal{C}$ is strictly realizable and bounded by $M$ mistakes, its true sequence of labels is guaranteed to safely survive both filters.

To safely run Hedge (Exponential Weights) over this dynamically pruned active set $V_t$, we must account for the varying ``future capacity'' of each path. An active prefix $u \in V_t$ represents a bundle of full-length experts as in \Cref{alg:bdpss}, sharing the exact same history up to time $t$. To assign each prefix the weight it would receive under \Cref{alg:bdpss}, we define the combinatorial weight of a sequence $u$ as $W_t(u) = \sum_{j=0}^{M - k(u)} \binom{T-t}{j}$. This counts the exact combinatorial volume of valid future paths extending from $u$. By scaling the standard Hedge update by this future capacity, yielding the unnormalized weight $w_t(u) = W_t(u) \cdot \exp(-\eta L_{t-1}(v))$, our aggregated prediction outputs the identical probability distribution as explicitly running Hedge over the full $\mathcal{O}(T^M)$ global experts. Consequently, our reduction inherits optimal expected agnostic regret guarantees while operating strictly within the bounded $\mathcal{O}(t^{\vc})$ memory footprint.


\begin{algorithm}[ht]
\caption{Adaptive Dynamic Expert Pruning Tree \texttt{(ADEPT)}}
\label{alg:active_pruned}
\textbf{Subroutines:} Base learner $\mathcal{A}$ for the realizable setting that makes at most $M$ mistakes, weak consistency oracle. 
\\
\textbf{Initialize:} \textbf{(i)} Active set of pseudo-label prefixes: $V_0 = \{\emptyset\}$, \textbf{(ii)} Base learner mistake counts: $k(\emptyset) = 0$, \textbf{(iii)} True cumulative losses against observed labels: $L_0(\emptyset) = 0$, \textbf{(iv)} Learning rate: $\eta = \sqrt{\frac{8 M \ln(eT/M)}{T}}$.

\textbf{For} $t = 1, 2, \dots, T$:
\begin{enumerate}
    \item Receive instance $x_t \in \mathcal{X}$. Initialize the next active set $V_t = \emptyset$.
    
    \item \textbf{For} each active parent prefix $v = (v_1, \dots, v_{t-1}) \in V_{t-1}$:
    \begin{enumerate}
        \item \textbf{Base Learner Step:} Get prediction $p_t = \mathcal{A}(v, x_t)$.
        
        \item \textbf{For} each proposed extension bit $b \in \{0, 1\}$:
        \begin{enumerate}
            \item Let $u = v \circ b$ denote the proposed sequence.
            \item \textbf{Realizability Pruning:} Query the weak consistency oracle on the dataset:
            \[
                S = \big\{(x_1, v_1), \dots, (x_{t-1}, v_{t-1}), (x_t, b)\big\}
            \]
            \item \textbf{If} the oracle returns \texttt{UNREALIZABLE}, discard extension $u$ and \textbf{continue}.
            
            \item \textbf{Mistake Update:} Set $k(u) = k(v) + \mathbb{I}[b \neq p_t]$.
            
            \item \textbf{If} $k(u) \le M$:
            \begin{itemize}
                \item Add $u$ to $V_t$.
                \item \Cref{alg:bdpss} Weight: $W_t(u) = \sum_{j=0}^{M - k(u)} \binom{T-t}{j}$.
                \item Hedge Weight: $w_t(u) = W_t(u) \cdot \exp(-\eta L_{t-1}(v))$.
            \end{itemize}
        \end{enumerate}
    \end{enumerate}
    
    \item \textbf{Predict:} Output $\hat{y}_t \in \{0, 1\}$, predicting $1$ with probability:
    $
        P_t(\hat{y}_t = 1) = \frac{\sum_{u \in V_t : u_t = 1} w_t(u)}{\sum_{u \in V_t} w_t(u)} 
    $
    
    \item \textbf{Update:} Observe true label $y_t$. For each active sequence $u \in V_t$, set:
    $
        L_t(u) = L_{t-1}(u_{1:t-1}) + \mathbb{I}[u_t \neq y_t].
    $
\end{enumerate}
\end{algorithm}

We formally present our main theorem characterizing the general efficiency of our algorithm.
\begin{theorem}[Main Theorem: Regret, Memory, and Computational Guarantees]
\label{thm:main}
Let $\mathcal{C}$ be a concept class with VC dimension $\vc$. Let $\mathcal{A}$ be a deterministic base learner for the realizable setting that makes at most $M$ mistakes on any strictly realizable sequence, requires $S_{\mathcal{A}}(t)$ memory per state, and makes $Q_{\mathcal{A}}(t)$ queries to a weak consistency oracle per round. Against any adaptively chosen adversarial sequence of length $T$, the \texttt{ADEPT} reduction (\Cref{alg:active_pruned}) guarantees:
\begin{itemize}[leftmargin=*,noitemsep, topsep=0pt]
    \item \underline{Regret:} An expected regret bounded by $\mathbb{E}[\text{\normalfont Regret}_T] \le \mathcal{O}\left(\sqrt{T \cdot M \log T}\right)$. 
    \item \underline{Memory Complexity:} A dynamically bounded active tree size of at most $\mathcal{O}(t^{\vc})$ valid paths at any round $t$, yielding a global memory of size $\mathcal{O}\left(t^{\vc} \cdot S_{\mathcal{A}}(t)\right)$.
    \item \underline{Query Complexity:} At most $\mathcal{O}\left(t^{\vc} \cdot (1 + Q_{\mathcal{A}}(t))\right)$ total queries to the weak consistency oracle per round.
\end{itemize}
\end{theorem}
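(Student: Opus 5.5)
The plan is to prove the three guarantees separately. The regret bound comes from showing that \texttt{ADEPT} is an exact simulation of Hedge over a subset of the experts of \Cref{alg:bdpss}. The memory and query bounds come from Sauer's Lemma applied to the realizable prefixes that survive pruning.

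\textbf{Regret.} The first step is a correspondence lemma, proved by induction on $t$. Every surviving prefix $u\in V_t$ equals the common pseudo-label prefix $v_{1:t}(I)$ of exactly those schedules $I$ with $\|I\|_1\le M$ whose first $t$ coordinates are determined by $u$. The coordinate $I_s$ is forced to be $\mathbb{I}[u_s\neq \mathcal{A}(u_{1:s-1},x_s)]$, and this is well defined because $\mathcal{A}$ is deterministic. The remaining coordinates $I_{t+1:T}$ are free, subject only to having at most $M-k(u)$ ones. There are exactly $W_t(u)$ such completions, and all of them share the cumulative loss $L_{t-1}(u_{1:t-1})$ before round $t$. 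Hence $w_t(u)$ is $N$ times the total Hedge weight $w_{t-1}(I)$ of this bundle, and the prediction in step~3 is the Hedge distribution over the sub-pool of schedules whose prefixes are realizable. The pruning therefore only removes experts. The comparator expert $I^*$ associated with the best $c^*$ in hindsight is never removed, since its prefixes are labelings by $c^*$ and so pass the consistency oracle, and $k\le M$ along its path.

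Two subtleties need care here. First, the removed weight changes the normalizer. The plan is to view the algorithm as Hedge over the fixed pool of schedules whose full-length pseudo-label sequences are realizable on $x_{1:T}$. Prefix-realizability at time $t$ is a superset condition on that pool, but the predictions coincide, because any expert still alive at time $t$ but later pruned has been pruned only after its round-$t$ contribution. Second, and more carefully, the pool depends on the adaptive sequence $x_{1:T}$. To handle this, I will instead run the standard potential argument directly. Set $\Phi_t=\sum_{u\in V_t} w_{t+1}$-style potentials, and note that pruning only decreases $\Phi$. The step $\ln(\Phi_t/\Phi_{t-1})\le -\eta\,P_t(\text{mistake})+\eta^2/8$ is Hoeffding's lemma applied to the round-$t$ distribution. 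The lower bound $\Phi_T\ge \exp(-\eta L^*)/N$ comes from the surviving $I^*$. Since the argument holds pathwise for every sequence, it also holds against adaptive adversaries. With $\ln N\le M\ln(eT/M)$ and the stated $\eta$, this gives $\mathbb{E}[\mathrm{Reg}_T]\le \ln N/\eta+\eta T/8=\mathcal{O}(\sqrt{TM\log T})$.

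\textbf{Memory and queries.} Every $u\in V_t$ is a labeling of $(x_1,\dots,x_t)$ that the oracle certified as realizable by $\mathcal{C}$. Distinct prefixes are distinct labelings of this point sequence; repeated $x$'s only reduce the count. By Sauer's Lemma, $|V_t|\le\sum_{i\le\vc}\binom{t}{i}=\mathcal{O}(t^{\vc})$. Each prefix stores $k(u)$, $L(u)$ and one state of $\mathcal{A}$, which gives $\mathcal{O}(t^{\vc}S_{\mathcal{A}}(t))$ memory. At round $t$, each parent in $V_{t-1}$ uses $Q_{\mathcal{A}}(t)$ queries to compute $p_t$ and two consistency queries for its extensions, for a total of $\mathcal{O}((t-1)^{\vc}(2+Q_{\mathcal{A}}(t)))=\mathcal{O}(t^{\vc}(1+Q_{\mathcal{A}}(t)))$.

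\textbf{Main obstacle.} I expect the main obstacle to be making the Hedge analysis rigorous under pruning: establishing that the weights $W_t(u)$ exactly aggregate the bundles, and that discarding non-realizable or over-budget bundles can only help the potential bound while preserving the comparator. I would handle this through the monotone-potential argument above rather than by claiming literal equivalence with \Cref{alg:bdpss}.
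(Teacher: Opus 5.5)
Your proposal is correct and follows essentially the same route as the paper: Pascal's identity, which you obtain combinatorially from the bundle partition where the paper derives it algebraically, makes the pre-label potential non-increasing under pruning; Hoeffding's lemma gives the per-round decay; the labeling of $c^*$ survives both filters and lower-bounds the final potential; and Sauer's lemma bounds $|V_t|$, and hence the memory and query counts. One caution: your aside that Hedge over the fixed pool of schedules with fully realizable pseudo-label sequences produces the same predictions is false, because $W_t(u)$ counts all completions, including those pruned later, so the weights and normalizers differ; your final monotone-potential argument does not rely on that claim, however.
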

\paragraph{Consequences of the Main Theorem.} 
The structural decoupling of the mistake bound ($M$) and the capacity of the concept class ($\vc$) in \Cref{thm:main} yields the following consequences for agnostic online learning.
First, instantiating the base learner with an oracle-efficient realizable algorithm significantly reduces the total number of oracle calls required in the agnostic setting.

\begin{corollary}[Total Oracle Efficiency]
\label{cor:oracle_efficiency}
Let $\mathcal{A}$ be the oracle-efficient realizable learner of \cite{attias2025tradeoffs,assos2023online}, which guarantees $M = 2^{\mathcal{O}({\lit})}$.
While \Cref{alg:bdpss} requires $\mathcal{O}\big(T^{2^{\mathcal{O}({\lit})}}\big)$ oracle calls, \Cref{alg:active_pruned} (\texttt{ADEPT}) uses $\mathcal{O}(t^{{\vc+1}})$ weak consistency calls per round (and $\mathcal{O}\left(T^{\vc+1}\right)$ overall), while achieving the same regret $\tilde{\mathcal{O}}\big(\sqrt{T \cdot 2^{\mathcal{O}({\lit})}}\big)$, which is near-optimal for an oracle-based algorithm.
Since ${\vc} \le {\lit}$ (and can be arbitrarily smaller), the improvement is at least exponential and can be double exponential in the dimension for some classes.
\end{corollary}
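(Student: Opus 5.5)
The plan is to obtain \Cref{cor:oracle_efficiency} by instantiating \Cref{thm:main} with the oracle-efficient realizable learner of \cite{attias2025tradeoffs,assos2023online}, and then checking three things: the per-round query cost of that learner, its mistake bound, and how the resulting counts compare with \Cref{alg:bdpss}.

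\textbf{Per-round query cost of the base learner.} First, we must ensure that $\mathcal{A}$ is deterministic and accesses $\mathcal{C}$ only through weak consistency queries. The footnote in the introduction states that the \cite{attias2025tradeoffs} algorithm admits a weak-consistency implementation. The key quantitative step is to show that, on a prefix of length $t-1$, computing $\mathcal{A}(v,x_t)$ costs $Q_{\mathcal{A}}(t)=\mathcal{O}(t)$ weak consistency queries. Two points need care here.

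\emph{Reading the per-step cost.} I would read off the structure of that learner. Each step decides a label for $x_t$ by checking realizability of $O(1)$ candidate labelings per stored example, or of the current version-space restriction. This gives at most linear-in-$t$ queries per step. I would not replay the base learner from scratch along each path: that would cost $\mathcal{O}(t^2)$.

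\emph{Amortizing along the tree.} The state of $\mathcal{A}$ is stored per active prefix. This is allowed by the memory term $S_{\mathcal{A}}(t)$ in \Cref{thm:main}. Children therefore inherit the parent state, and only the new step costs queries.

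This is the main obstacle. If the cited learner's per-round cost is not linear, the exponent changes. In that case I would fall back to bounding $Q_{\mathcal{A}}(t)$ by whatever polynomial that learner provides, and argue that it is $\mathcal{O}(t)$ for the consistency-based variant.

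\textbf{Summing the query counts.} Plugging $Q_{\mathcal{A}}(t)=\mathcal{O}(t)$ into the query bullet of \Cref{thm:main} gives $\mathcal{O}(t^{\vc}(1+t))=\mathcal{O}(t^{\vc+1})$ queries in round $t$. Summing over rounds gives
\[
\sum_{t\le T} t^{\vc+1}\le T\cdot T^{\vc+1},
\]
which is $\mathcal{O}(T^{\vc+2})$ if bounded crudely. To obtain the stated $\mathcal{O}(T^{\vc+1})$ overall, one needs either $Q_{\mathcal{A}}$ amortized to $O(1)$ per prefix extension, or the intended reading that the per-round count is $\mathcal{O}(t^{\vc})$. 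I would check which holds, and state the overall bound as $\sum_t \mathcal{O}(t^{\vc}(1+Q_{\mathcal{A}}(t)))$, simplified accordingly.

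\textbf{Regret and comparison with \Cref{alg:bdpss}.} For regret, substitute $M=2^{\mathcal{O}(\lit)}$ into the regret bullet of \Cref{thm:main} to get $\mathcal{O}(\sqrt{T\,2^{\mathcal{O}(\lit)}\log T})=\tilde{\mathcal{O}}(\sqrt{T\cdot 2^{\mathcal{O}(\lit)}})$. This is near-optimal in view of the $\Omega(\sqrt{T2^{\lit}})$ lower bound for oracle-based learners cited in \Cref{sec:background}. For the comparison, \Cref{alg:bdpss} maintains $N=\sum_{j\le M}\binom{T}{j}=\Theta(T^{M})$ experts, each needing oracle calls to advance its base learner, giving $T^{2^{\mathcal{O}(\lit)}}$. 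Finally, use the standard fact $\vc\le\lit$, and exhibit a class with $\vc=1$ and unbounded $\lit$ (thresholds on a large finite line), to justify the exponential versus double-exponential improvement.
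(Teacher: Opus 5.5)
Your regret bound, the comparison with \Cref{alg:bdpss}, and the $\vc\le\lit$ discussion are fine. You also correctly noticed that the statement's per-round $\mathcal{O}(t^{\vc+1})$ does not sum to the overall $\mathcal{O}(T^{\vc+1})$; the paper's proof in fact gives $\mathcal{O}(t^{\vc})$ per round. However, the proposal does not prove the overall $\mathcal{O}(T^{\vc+1})$ query bound. Routing the count through the black-box query bullet of \Cref{thm:main} with $Q_{\mathcal{A}}(t)=\mathcal{O}(t)$ yields only $\mathcal{O}(T^{\vc+2})$. The estimate $Q_{\mathcal{A}}(t)=\mathcal{O}(t)$ is itself not established either. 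You leave both points as things you ``would check'', so the central quantitative claim of the corollary remains open.

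The missing idea is to stop counting the base learner's queries separately and instead look at which labeled samples it queries. The paper's proof uses the following property of the realizable learner. When it is run from an active prefix $v\in V_{t-1}$ on $x_t$, every weak-consistency query it issues is either to the prefix $((x_1,v_1),\ldots,(x_{t-1},v_{t-1}))$ itself or to one of its two one-step extensions $((x_1,v_1),\ldots,(x_{t-1},v_{t-1}),(x_t,b))$ with $b\in\{0,1\}$. The prefix was already certified realizable when it entered $V_{t-1}$. The two extensions are exactly the samples that \texttt{ADEPT}'s realizability-pruning step queries anyway. Charging repeated identical queries only once, each round issues at most $2|V_{t-1}|\le 2\sum_{i=0}^{\vc}\binom{t-1}{i}=\mathcal{O}(t^{\vc})$ new distinct oracle inputs, and summing over $t$ gives $\mathcal{O}(T^{\vc+1})$. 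So analyzing the per-step cost of $\mathcal{A}$ and amortizing its state along the tree is the wrong target. The bound comes from the overlap between the base learner's queries and the pruning queries, a structural property of this particular learner that the black-box parameter $Q_{\mathcal{A}}$ in \Cref{thm:main} cannot capture.
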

Second, even when focusing purely on the minimax optimal regret, our reduction provides an exponential memory reduction over \Cref{alg:bdpss}.
\begin{corollary}[Memory Efficiency via SOA]
\label{cor:memory_soa}
Let $\mathcal{A}$ be the Standard Optimal Algorithm (SOA) \citep{littlestone1988learning}, achieving the optimal realizable mistake bound $M = {\lit}$. \Cref{alg:bdpss} must explicitly track $\mathcal{O}(T^{{\lit}})$ experts. \Cref{alg:active_pruned} (\texttt{ADEPT}) matches the near-optimal $\tilde{\mathcal{O}}\big(\sqrt{T \cdot {\lit}}\big)$ expected regret while maintaining only $\mathcal{O}(t^{{\vc}})$ active paths per round.
\end{corollary}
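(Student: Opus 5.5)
The statement is Corollary~\ref{cor:memory_soa}. I would derive it directly from \Cref{thm:main} by instantiating the base learner $\mathcal{A}$ with the SOA and then checking that the hypotheses of the theorem hold.

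\textbf{Step 1: the SOA is a valid base learner.} First I would verify that the SOA fits the conditions required of $\mathcal{A}$ in \Cref{thm:main}.
\begin{itemize}
\item It is deterministic once a fixed tie-breaking rule is chosen.
\item By \cite{littlestone1988learning}, it makes at most $\lit$ mistakes on any sequence realizable by $\mathcal{C}$.
\item This mistake bound also applies along every surviving path of \Cref{alg:active_pruned}. Realizability pruning keeps only prefixes $v$ for which $\{(x_i,v_i)\}_{i\le t}$ is consistent with some $c\in\mathcal{C}$. So the input fed to the SOA along any active path is always a realizable sequence.
\end{itemize}
We may therefore set $M=\lit$.

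\textbf{Step 2: regret.} The regret part of \Cref{thm:main} gives
\[
\mathbb{E}[\mathrm{Regret}_T]\le\mathcal{O}\big(\sqrt{T\,\lit\log T}\big)=\tilde{\mathcal{O}}\big(\sqrt{T\lit}\big).
\]
This matches the bound of \Cref{alg:bdpss} with the SOA, and the lower bound of \cite{alon2021adversarial} up to the $\sqrt{\log T}$ factor. That is why it is called near-optimal. The oracle-query clause is not needed here, since the SOA is evaluated information-theoretically and so $Q_{\mathcal{A}}$ plays no role.

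\textbf{Step 3: active paths.} The memory clause of \Cref{thm:main} bounds the active set by $|V_t|=\mathcal{O}(t^{\vc})$. The reason is that each $u\in V_t$ is a distinct realizable labeling of $x_1,\dots,x_t$. By Sauer's Lemma there are at most $\sum_{i\le \vc}\binom{t}{i}=\mathcal{O}(t^{\vc})$ such labelings.

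\textbf{Step 4: the comparison with \Cref{alg:bdpss}.} \Cref{alg:bdpss} initializes one expert for every mistake schedule $I$ with $\|I\|_1\le \lit$. That is $N=\sum_{j\le \lit}\binom{T}{j}=\Theta(T^{\lit})$ experts, fixed up front and independent of the data.

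\textbf{Main obstacle.} Most of this is bookkeeping. The one point needing care is justifying that the mistake-counter pruning $k(u)\le M$ never discards the path of the comparator $c^*$. This holds because $c^*$'s label sequence is realizable, so the SOA run on it incurs at most $\lit$ mistakes. This property is already part of \Cref{thm:main}, so here I only need to confirm the SOA satisfies its hypotheses.
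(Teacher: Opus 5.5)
Your proposal is correct and follows the same route as the paper: substitute $M=\lit$ into the regret clause of \Cref{thm:main}, then note that the Sauer-based bound on the active set does not depend on $M$, so it still gives $\mathcal{O}(t^{\vc})$ active paths. Your extra checks are valid details that the paper leaves implicit: deterministic tie-breaking for the SOA, the fact that the SOA only ever runs on oracle-certified realizable prefixes, and the explicit count $\sum_{j\le \lit}\binom{T}{j}$ of experts in \Cref{alg:bdpss}.
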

Finally, our reduction can be made instance-dependent by simply adaptively tuning the learning rates.
\begin{corollary}[Instance-Dependent Bound]
\label{cor:first_order}
Setting $\eta_t \propto 1/\sqrt{L_{t-1}^* + 1}$, where $L_{t-1}^* = \min_{u \in V_{t-1}} L_{t-1}(u_{1:t-1})$ is the tracked empirical loss of the best active path, achieves a first-order expected regret\\
$
    \mathcal{O}\left( \sqrt{L^* \cdot M \log T} + M \log T \right),
$
where $L^*$ is the true optimal loss in hindsight. This interpolates between fully adversarial environments ($L^* \propto T$, yielding $\tilde{\mathcal{O}}(\sqrt{T \cdot M})$ regret) and the realizable setting ($L^* = 0$, bounding the total mistakes by $\mathcal{O}(M \log T)$, which recovers the base learner's realizable guarantee up to a logarithmic factor).
\end{corollary}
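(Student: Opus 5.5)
The plan is to reduce \Cref{cor:first_order} to a standard first-order (``self-confident'') bound for exponential weights with time-varying learning rates, applied to the finite expert pool of \Cref{alg:bdpss}. I will use the pruned pool, not the full one.

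\textbf{Step 1: reduction to Hedge over a sub-pool.} As in the proof of \Cref{thm:main}, each active prefix $u\in V_t$ is a bundle of $W_t(u)$ mistake schedules $I$ with $\|I\|_1\le M$. With the time-varying rate, $w_t(u)=W_t(u)\exp(-\eta_t L_{t-1}(u_{1:t-1}))$. The prediction of \Cref{alg:active_pruned} then coincides with Hedge at rate $\eta_t$ run over the sub-pool $\mathcal{E}_t$ of schedules whose induced pseudo-label prefix is realizable and has at most $M$ base-learner mistakes. Every such schedule starts with uniform prior weight $1/N$, where $N=\sum_{j\le M}\binom{T}{j}$, so $\ln N = \mathcal{O}(M\log T)$. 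The sub-pools are nested, $\mathcal{E}_{t}\subseteq\mathcal{E}_{t-1}$. The optimal schedule $I^*$, which tracks $c^*$, lies in every $\mathcal{E}_t$, because its prefixes are realizable and incur at most $M$ mistakes. Consequently $L^*_{t-1}\le L_{t-1}(I^*)\le L^*$ for every $t$.

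\textbf{Step 2: potential argument with pruning and varying $\eta_t$.} Define $\Phi_t=\frac{1}{\eta_t}\ln\big(\sum_{I\in\mathcal{E}_t}\tfrac1N e^{-\eta_t L_t(I)}\big)$. I will follow the analysis of Auer, Cesa-Bianchi and Gentile (self-confident Hedge), in the form that bounds the learner's expected loss by the per-round mixture loss. The per-round analysis has three pieces:
\begin{itemize}
\item The usual Hoeffding/second-order step bounds the increment from the mixture loss by $\eta_t/8$ (or by $\eta_t\cdot(\text{mixture loss})$ for the first-order version).
\item Pruning only deletes positive terms inside the logarithm, so it can only decrease $\Phi$ in the favourable direction.
\item Since $\eta_{t+1}\le\eta_t$ ($L^*_{t}$ is nondecreasing), the rate change is handled by the monotonicity of $\eta\mapsto\frac1\eta\ln\mathbb{E}[e^{-\eta X}]$. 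This costs at most $(\frac{1}{\eta_{T}}-\frac{1}{\eta_1})\ln N$ in total.
\end{itemize}
The final potential is lower bounded by keeping only the $I^*$ term: $\Phi_T\ge -L^*-\ln N/\eta_T$. Because the learner's expected loss at round $t$ is linear in its distribution $P_t$, the bound holds pathwise in expectation over the learner's coins. It therefore holds against adaptive adversaries as well.

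\textbf{Step 3: tuning.} Take $\eta_t=\sqrt{\ln N/(L^*_{t-1}+1)}$, capped at a constant. The first-order inequality takes the form $\hat L\le L^* + \frac{\ln N}{\eta_T}+\sum_t \eta_t \ell_t(P_t)$. One then solves the resulting quadratic inequality in $\sqrt{\hat L}$, using $L^*_{T}\le L^*$ and $\sum_t \ell_t/\sqrt{1+\sum_{s<t}\ell_s}\lesssim\sqrt{\hat L}$. This gives $\mathbb{E}[\hat L]-L^*=\mathcal{O}(\sqrt{L^*\ln N}+\ln N)=\mathcal{O}(\sqrt{L^*M\log T}+M\log T)$.

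\textbf{Main obstacle.} I expect the hardest step to be the coupling in Step 3. The learning rate is driven by the minimum over the pruned pool, not the full pool, and the self-confident analysis needs $L^*_{t-1}$ to track the comparator's loss from below without lagging too much. Monotonicity of $L^*_{t-1}$ and the sandwich $L^*_{t-1}\le L^*$ should suffice, but the bookkeeping is delicate. If it resists, my fallback is a doubling trick on $L^*_{t-1}$: restart with fixed $\eta_k=\sqrt{\ln N/2^k}$ each time $L^*_{t-1}$ crosses $2^k$. Restarting is valid because \Cref{alg:active_pruned} can reset its $L$ values and the $V_t$ tree is unaffected. Summing the fixed-rate first-order bounds over the $\mathcal{O}(\log L^*)$ epochs, a geometric series, gives the same rate up to constants.
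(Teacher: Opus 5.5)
Your overall route is the paper's. You bundle each active prefix into its $W_t(u)$ schedules, so that \texttt{ADEPT} is exponential weights with prior $1/N$ over a shrinking sub-pool. You use that pruning only removes potential mass, keep the $I^*$ term, and plug in $\ln N\le M\ln(eT/M)$. The paper does the same and simply cites the self-confident analysis of Auer et al.\ and Cesa-Bianchi et al.\ for the adaptive-rate step.

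The gap is in how you carry out that step in Step 3. The inequality $\sum_t \ell_t/\sqrt{1+\sum_{s<t}\ell_s}\lesssim\sqrt{\hat L}$ is the right tool only when $\eta_t\propto(1+\hat L_{t-1})^{-1/2}$, i.e., tuned on the learner's own running loss. Here $\eta_t\propto(L^*_{t-1}+1)^{-1/2}$, and $L^*_{t-1}$ is not an upper bound on $\hat L_{t-1}$; the latter typically exceeds it by the regret so far. So $\eta_t$ can be larger than $\sqrt{\ln N/(1+\hat L_{t-1})}$, and your quadratic inequality in $\sqrt{\hat L}$ does not follow. Monotonicity of $L^*_{t-1}$ and $L^*_{t-1}\le L^*$ do not rescue it, since they only lower-bound $\eta_t$, which handles $\ln N/\eta_T$. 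Concretely, those two facts are consistent with $L^*_{t-1}=0$ for all $t\le T$ while $L^*$ is large (one late jump of the pruned minimum). Then $\eta_t$ stays at its cap $c$, and $\hat L\le L^*+\ln N/\eta_T+\sum_t\eta_t\ell_t$ yields only $\hat L\le (L^*+\ln N/c)/(1-c)$, i.e., regret linear in $L^*$.

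Two ingredients close this.

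\emph{First}, use the potential bound at every intermediate time, not just at $T$. Let $m_t$ be the drop of the normalized potential at the fixed rate $\eta_t$ in round $t$, pruning included. Then $\hat\ell_t\le(1+\eta_t)m_t$ for $\eta_t\le 1$, and $S_\tau:=\sum_{t\le\tau}m_t\le L^*_\tau+\ln N/\eta_\tau$ for each $\tau$. Summation by parts gives
\[
\sum_{t}\eta_t m_t\;\le\;\sum_{t}\eta_t\,(L^*_t-L^*_{t-1})\;+\;\ln N\Big(1+\sum_{t<T}\big(1-\tfrac{\eta_{t+1}}{\eta_t}\big)\Big).
\]
The second term is $O\big(\ln N\,(1+\ln^{+}\tfrac{L^*}{\ln N})\big)=O(\ln N+\sqrt{L^*\ln N})$.

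\emph{Second}, the first term is at most $2\sqrt{L^*\ln N}$ only because $L^*_t-L^*_{t-1}\in\{0,1\}$. This is exactly where a pruned pool could differ from the standard setting. It holds here because every $v\in V_{t-1}$ is realized by some $c\in\mathcal C$, so $v\circ c(x_t)$ passes the oracle. It also passes the budget filter, since $\mathcal A$ makes at most $M$ mistakes on any realizable sequence.

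Your doubling-trick fallback does avoid the issue: compare each epoch against the best active path at its end, whose in-epoch loss is at most the increase of $L^*_t$ over the epoch. However, it proves the bound for a restarted, piecewise-constant-rate variant, not for the tuning stated in the corollary.
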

The full proofs appear in \Cref{app:adept}.
\begin{proof}[Sketch of \Cref{thm:main}]
The proof proceeds in the following technical steps:

\underline{One-Sided Telescoping via Pascal's Identity:} We first establish that the combinatorial future capacity weights obey the one-sided relation $W_{t-1}(v) \ge W_t(v \circ 0) + W_t(v \circ 1)$, where inactive extensions are assigned weight $0$. Pascal's identity gives equality for the two unpruned theoretical extensions; after realizability or mistake-budget pruning, one of the extensions $v\circ b$ may be inactive, so only the inequality is guaranteed. This is sufficient for the potential argument, since pruning can only remove nonnegative mass from the active tree.

\underline{Regret Preservation under Pruning:} Let $Z_t = \sum_{u \in V_t} W_t(u) \exp(-\eta L_t(u))$ be the Hedge potential function (the total weight of all active paths) at the end of round $t$. In standard analysis, the algorithm's cumulative loss is upper-bounded by tracking the decay of this potential function. Let $Z'_t = \sum_{u \in V_t} W_t(u) \exp(-\eta L_{t-1}(u_{1:t-1}))$ be the intermediate potential function before observing $y_t$. When our algorithm dynamically prunes branches, it discards strictly positive probability mass, which guarantees that the intermediate mass strictly shrinks ($Z'_t \le Z_{t-1}$). Since a smaller potential function tightens the theoretical upper bound on the algorithm's loss, maintaining fewer experts can only improve, and never harm, the algorithmic regret (provided the optimal expert is not discarded). Letting $\ell_t = \mathbb{P}(\hat{y}_t \neq y_t)$ denote the expected prediction loss of the algorithm, applying Hoeffding's Lemma to the prediction distribution yields the standard recursive relation $Z_t \le Z'_t \exp(-\eta \ell_t + \eta^2/8) \le Z_{t-1} \exp(-\eta \ell_t + \eta^2/8)$. Telescoping this to $T$ yields the algorithmic upper bound: $Z_T \le Z_0 \exp\left(-\eta \sum_{t=1}^T \ell_t + T\eta^2/8\right)$. 

\underline{Survival of the Optimal Hypothesis:} Let $c^* = \arg\min_{h \in \mathcal{C}} L_T(h)$ be the optimal hypothesis in hindsight, and let $v^* = (c^*(x_1), \dots, c^*(x_T))$ be its corresponding sequence of predictions. By definition, their cumulative losses are identical: $L_T(v^*) = L_T(c^*)$. Because $c^* \in \mathcal{C}$, the sequence $v^*$ is strictly realizable and will never be pruned by the weak consistency oracle. Furthermore, because $\mathcal{A}$ guarantees $\le M$ mistakes on realizable data, $v^*$ never exceeds the mistake budget. Thus, $v^* \in V_T$. Since $W_T(v^*) = \binom{0}{0} = 1$, we can strictly lower-bound the final potential by the weight of this single surviving sequence: $Z_T \ge W_T(v^*) \exp(-\eta L_T(v^*)) = \exp(-\eta L_T(c^*))$. Comparing this lower bound with the upper bound on $Z_t$ from the previous step, noting the initial state is bounded $Z_0 \le (eT/M)^M$, rearrangement with optimally tuned $\eta$ yields the $\mathcal{O}(\sqrt{T \cdot M \log T})$ expected regret.

\underline{Active Tree Width Is Bounded via Sauer’s Lemma:} Finally, we bound the per-round complexity. At the start of round $t$, the active set $V_{t-1}$ contains only pseudo-label prefixes that are strictly realizable over the previously observed instances $x_{1:t-1}$. By Sauer's Lemma, the number of such dichotomies is bounded by $|V_{t-1}| \le \sum_{i=0}^{\vc} \binom{t-1}{i} = \mathcal{O}(t^{\vc})$. In Step 2 of the algorithm, we query the base learner once per active parent, and propose exactly $2$ binary extensions, executing exactly $2 |V_{t-1}|$ weak consistency queries. Summing these directly caps the total per-round query complexity at $\mathcal{O}(t^{\vc} \cdot (1 + Q_{\mathcal{A}}(t)))$, completing the proof.
\end{proof}
\subsection{Tradeoffs between Oracle Calls and Regret}\label{sec:tradeoffs}
While the above result achieves $\sqrt{T}$-type regret (for constant Littlestone dimension) using $O(T^{\vc})$ ERM queries, it is possible to further reduce the query complexity at the cost of worse, yet still sublinear, regret.
\begin{theorem}[Oracle-Regret Tradeoff]
\label{thm:pareto}
Let $\mathcal{C}$ be a concept class with Littlestone dimension $\lit$ and VC dimension $\vc$, and let $M$ be the mistake bound for the realizable base learner of \texttt{ADEPT}. For every $c\in(0,1)$, there exists a randomized online learner such that, against any adaptive adversary over $T$ rounds,
$
    \mathbb{E}[\mathrm{Reg}_T]
    \le
    \bigO{
        T^{1-c/2}\sqrt{\lit+(M+1)\log(eT)}
    },
$
and the learner makes $\bigO{T^{1+c\vc}}$ weak-consistency oracle calls in total. 
\end{theorem}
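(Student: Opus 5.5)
We will prove the theorem by a blocking (epoch) construction that runs \texttt{ADEPT} on a compressed time scale. Let $B = \lceil T^{1-c}\rceil$ be the block length and $K=\lceil T/B\rceil \approx T^{c}$ the number of blocks. The learner keeps one instance of \Cref{alg:active_pruned} (with horizon $K$) whose ``rounds'' are blocks. At the start of block $k$ it draws an index $s_k$ uniformly from the rounds of that block, independently of everything else. Only the round $s_k$ is fed to \texttt{ADEPT}: the active set is extended with $x_{s_k}$ via weak-consistency queries, and the prediction on $x_{s_k}$ is drawn from the Hedge distribution. On every other round $t$ of block $k$, the learner predicts with a single prefix $u$ drawn from the current Hedge distribution over $V$, frozen for the block. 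To label a new $x_t$, it checks via the oracle which of $u\circ 0$ and $u\circ 1$ is realizable on $x_{s_1},\dots,x_{s_{k-1}},x_t$. If both are realizable it uses the prediction of the base learner $\mathcal{A}$ run on $u$. These points are never added to the tree, so the tree depth stays $K$.

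\textbf{Query complexity.} By \Cref{thm:main} and Sauer's Lemma, the tree has $\mathcal{O}(k^{\vc})$ active prefixes at block $k$. Summing the tree-extension cost over blocks gives $\sum_{k\le K}\mathcal{O}(k^{\vc}) = \mathcal{O}(K^{\vc+1})=\mathcal{O}(T^{c(\vc+1)})$. The off-sample rounds cost $\mathcal{O}(1)$ queries each for the sampled prefix, plus the base learner's own queries if those are counted, for $\mathcal{O}(T)$ in total. If one instead wants each off-sample round to evaluate every active prefix, the cost is $T\cdot\mathcal{O}(K^{\vc}) = \mathcal{O}(T^{1+c\vc})$, which matches the stated budget. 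I would adopt this more generous version, because it lets the prediction on each round be the full Hedge mixture and simplifies the analysis.

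\textbf{Regret.} The intended decomposition is a standard one for sampling-based reductions. Because $s_k$ is uniform in block $k$ and independent of the adversary's choices within the block, the learner's expected loss on block $k$ equals $B$ times its expected loss on the sampled round. The same holds for any fixed comparator. So the expected regret is $B\cdot\mathbb{E}[\text{regret of \texttt{ADEPT} on the $K$ sampled rounds}]$ plus correction terms. By \Cref{thm:main} with horizon $K$, the main term is at most $B\cdot\mathcal{O}(\sqrt{K(M+1)\log(eK)}) = \mathcal{O}(T^{1-c/2}\sqrt{(M+1)\log(eT)})$.

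The comparator, however, must be the best $c^*$ over all $T$ rounds, not only the sampled ones. Bounding this difference is where the $\sqrt{\lit}$ term enters. Fix $c^*$. The loss of $c^*$ on the sampled subsequence, rescaled by $B$, is an unbiased estimate of its total loss. But the minimizer over $\mathcal{C}$ of the sampled loss is chosen after seeing the samples, so we need a uniform deviation bound over $\mathcal{C}$ for this sampling scheme with an adaptive adversary. I would invoke a sequential uniform convergence bound of the form $\mathbb{E}\sup_{h\in\mathcal{C}}\big|\sum_k (\ell_{s_k}(h)-\tfrac1B\sum_{t\in\text{block }k}\ell_t(h))\big| = \mathcal{O}(\sqrt{K\lit})$, obtained via sequential Rademacher complexity controlled by the Littlestone dimension (Alon et al.). Multiplying by $B$ gives $\mathcal{O}(T^{1-c/2}\sqrt{\lit})$. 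Combining the two terms yields the stated bound.

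\textbf{Main obstacle.} The hardest step is making the block-sampling argument rigorous against an adaptive adversary. Within a block the learner's predictor is frozen, which is fine, but the adversary sees past predictions, and these may reveal which round was $s_k$. To avoid this leakage, the learner should use the same frozen mixture on all rounds of the block and update only at the block's end, so that $s_k$ is revealed only after the block finishes. With that design, the martingale and sequential-complexity argument should go through.
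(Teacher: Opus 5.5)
Your adopted construction is correct and gives both stated bounds: the full \texttt{ADEPT} mixture, speculatively extended at $x_t$, is used on every round of a block, and only the transition at $(x_{s_k},y_{s_k})$ is committed once the block ends. Your regret argument, however, differs from the paper's.

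The paper draws a uniformly random set $I\subseteq[T]$ with $|I|=K\approx T^c$ and commits each sampled round immediately (the lazy rollback simulation). It transfers regret from $I$ to $[T]$ with the high-probability adversarial ULLN of Alon et al. This ULLN is applied to an augmented set system that contains the learner's own loss set $E_{\mathrm{alg}}$ besides the sets $E_h$ (\Cref{lem:augmented-aulln} and \Cref{lem:regret-transfer}), and a separate $\delta$-failure term is paid. The paper needs $E_{\mathrm{alg}}$ because, under without-replacement sampling with immediate commits, the inclusion probability $(K-r_t)/(T-t+1)$ of round $t$ depends on the history. So there is no exact unbiasedness identity for the learner's sampled loss.

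Your stratified sampling turns the learner's side into an exact identity. The index $s_k$ is independent of everything up to the end of block $k$, including the oracle queries, which in block $k$ touch only committed prefixes. Hence $\mathbb{E}\sum_{t\in\text{block }k}\bar\ell_t=B\,\mathbb{E}\,\bar\ell_{s_k}$ for the mixture loss $\bar\ell_t$. The potential argument of \Cref{lem:regret_preservation} bounds the mixture loss pathwise, so \texttt{ADEPT}'s guarantee applies verbatim to the sample-dependent subsequence. Only the comparator then needs a uniform deviation bound, and only in expectation.

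The price is that your sampler is not the uniform sampler of \Cref{lem:adv-ulln}. You must instead invoke the Rakhlin--Sridharan--Tewari uniform martingale LLN, using the filtration that reveals block $k$'s data before $s_k$, so that the conditional mean of $\ell_{s_k}(h)$ is the block average. You also need a sequential Rademacher bound for $\mathcal{C}$. $\mathcal{O}(\sqrt{K\lit})$ works, and even $\mathcal{O}(\sqrt{K\lit\log K})$ would suffice, since $M\ge\lit$ for any deterministic realizable learner.

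Three smaller points.
\begin{itemize}
\item Drop the cheap variant in your first paragraph entirely, not just its query accounting. An adaptive adversary can read a single frozen prefix off the block's early predictions, so the per-round loss is no longer the mixture loss. Treating $s_k$ differently from the other rounds also leaks the sample. The $\mathcal{O}(T^{1+c\vc})$ budget is therefore what your analysis genuinely needs.
\item Charge the possibly shorter last block crudely; it costs only $\mathcal{O}(T^{1-c})$ extra regret.
\item As in the paper, assume the base learner's weak-consistency queries cost $\mathcal{O}(1)$ per active prefix (e.g.\ they coincide with the one-step extensions) when you count queries.
\end{itemize}
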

This gives a continuous interpolation between the full-query $\tilde{\mathcal{O}}(\sqrt{T})$ regime and lower-query algorithms with larger, but still sublinear, regret exponents. The idea to prove it is based on the adversarial uniform laws of large numbers \cite{alon2021adversarial}, from which one can (informally) show that given an algorithm that has average regret $\alpha$ when restricted to a subsample of $\frac{1}{\epsilon^2}$ indices will have average regret $\alpha + O(\epsilon)$ on the entire sequence. Details can be found in \Cref{app:pareto}.

\section{Lower Bounds for Tradeoffs between Oracle Calls and Regret}\label{sec:lower-bound}

We next state a query-budget lower bound showing any learner must incur $\omega(\sqrt{T})$ regret.
\begin{theorem}[Lower Bound for Bounded Query Algorithms with an ERM oracle]
\label{thm:const-lower-bound}
For every integer $d \ge 1$, there exists a family $\mathcal F_d$ of concept classes with VC dimension and Littlestone dimension equal to $d$ such that the following holds. For any horizon $T$ and query budget $Q$, against an adaptive adversary, every randomized online learner that makes at most $Q$ total queries
to an ERM oracle (\Cref{def:oracle-types}) suffers expected regret at least
$
    \frac{T}{2}
    \min\left\{1,\frac{d}{Q+1}\right\}
    -
    \frac{Q}{2}.$
Consequently, whenever $d \le Q+1$ and $Q=o(\sqrt{dT})$, the learner suffers expected regret $\Omega\!\left(\frac{dT}{Q}\right)$.
\end{theorem}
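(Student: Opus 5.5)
We will build $\mathcal F_d$ so that the learner cannot find out which concept is in play without spending queries, and then argue about one fixed distribution over concept classes and adversaries (Yao-style).

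\textbf{The construction.} Fix an instance space consisting of many ``blocks''. Take $\mathcal X=\mathbb N$, or any set much larger than $Q$ and $T$. Let $\mathcal F_d$ be all classes of the form
\[
\mathcal C_{S}=\{\mathbb 1_{A}: A\subseteq S\},
\]
where $S\subseteq\mathcal X$ has $|S|=d$. Each such class shatters $S$ and labels every other point $0$. Hence its VC dimension and Littlestone dimension both equal exactly $d$.

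\textbf{The adversary.} It draws $S$ uniformly among $d$-subsets of a large pool $P$. The instances $x_t$ are i.i.d. uniform from $S$, chosen obliviously (adaptivity is not even needed), and the labels $y_t$ are i.i.d. fair coins. The comparator $\mathbb 1_A$ with $A=\{x\in S:\text{majority label of }x\text{ is }1\}$ is in $\mathcal C_S$, so the best-in-class loss is at most $T/2$. This gives no regret by itself, since any learner also gets $T/2$. So instead we make labels deterministic per point: draw a hidden $A\subseteq S$ uniformly and set $y_t=\mathbb 1_A(x_t)$. Then the comparator loss is $0$.

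\textbf{The learner's information.} Here the learner's information about $\mathcal C_S$ matters. The class is unknown, and the instances $x_t$ reveal $S$ gradually. The ERM oracle only returns information when the learner asks about points in $S$. Also, $y_t$ is revealed after each round, so the learner learns $\mathbb 1_A(x)$ for each $x$ it has seen, without any queries. This is the key issue: the sample alone reveals labels. To force queries, labels must not be learnable from the feedback alone.

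So we modify the setup. Each round uses a fresh point: $x_t$ is new, and the $d$ shattered coordinates are hidden within the block of rounds. Concretely, split $[T]$ into $d$ phases of length $T/d$. In phase $i$ all instances are distinct fresh points, and the concept is labeled by a hidden bit $a_i$ together with a hidden ``switch'' structure. The learner's prediction on a fresh point is independent of the label unless it has queried the oracle on a dataset whose returned ERM reveals the label rule on that point. Since the returned concept is a function, one query can reveal the whole rule for one phase. Therefore, in a phase where the learner makes no query, it errs with probability $1/2$ on each round after the first. Rounds spent before the query in a phase count as errors too. With $Q$ queries over $d$ phases, at least $d-Q$ phases are uninformed when $Q<d$, which gives $\frac{T}{2}(1-Q/d)$ regret.

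For $Q\ge d$ we refine the phases. Use $Q+1$ phases, each with its own hidden bit, and keep the Littlestone dimension at $d$ by making the class a union over ``which $d$ of the $Q+1$ phases are active''. Then at least $\min\{1,d/(Q+1)\}$ of the rounds lie in phases the learner cannot resolve before paying. The $-Q/2$ term accounts for rounds that a query resolves immediately.

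\textbf{Main obstacle.} The difficult step is the information argument: show that a single ERM query reveals labels of at most one future phase, even though the oracle is given arbitrary datasets. I would handle it by choosing the pool so that the ERM tie-breaking is adversarial. The oracle returns the concept consistent with the queried data that predicts $0$ on all unqueried phases, so the learner gains nothing on phases not mentioned in the query. The remaining accounting is then a counting argument per phase.
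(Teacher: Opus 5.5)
\textbf{There is a genuine gap: your phases are fixed in advance rather than triggered by queries, so one query buys the learner a whole phase.}

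The phases have fixed length $T/d$ or $T/(Q+1)$. As you note yourself, one query can reveal the labeling rule of the whole current phase. A learner can therefore query after the first few rounds of a phase and predict correctly for the rest of it.
\begin{itemize}
\item For $Q<d$ this caps what your construction can give at about $\frac{T}{2}(1-Q/d)$. The claimed bound is $\frac{T}{2}\min\{1,\frac{d}{Q+1}\}-\frac{Q}{2}=\frac{T-Q}{2}$. For instance, $Q=d-1$ gives $T/(2d)$ instead of $(T-d+1)/2$.
\item For $Q\ge d$ with $Q+1$ fixed phases, the learner can resolve $Q$ of them. So your claim that a $d/(Q+1)$ fraction of rounds is unresolvable is unfounded. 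Under your description a concept is active on only $d$ of the $Q+1$ phases, so the comparator errs on the positives of the other phases. A learner that resolves $Q$ phases effectively switches among $Q$ concepts, and when $Q>d$ its loss can fall below the best-in-class loss, making the regret negative.
\item If instead only $d$ phases carry nonzero labels, the learner spots an active phase from its first positive label and resolves it with one query. Then $d\le Q$ queries give regret $O(d)$.
\end{itemize}
Separately, the class behind the ``hidden switch structure'' is never specified. So neither $\vc=\lit=d$ nor the claim that fresh points are uninformative can be checked.

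\textbf{The missing idea is to make the phases query-triggered, so that a query buys at most its own round.}
\begin{itemize}
\item Take $\mathcal F_d$ to be the classes of unions of at most $d$ blocks of an unknown partition of $\mathcal X$ into infinite blocks.
\item In each phase, the adversary presents fresh points from a fresh positive block or a fresh negative block, each with probability $1/2$. It moves to a new pair of blocks immediately after any round on which the learner queries.
\item On every non-query round the learner cannot tell the two blocks apart, so it pays $1/2$, and $\mathbb E[L_{\mathrm{alg}}]\ge (T-Q)/2$. Your tie-breaking point is still needed here, but only to ensure queries reveal nothing about future blocks.
\item The comparator is the union of the positive blocks of the $d$ phases with the most positive examples. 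With $m\le Q+1$ phases and $P$ positives in total, its loss is at most $(1-\min\{1,d/m\})P$. Also $\mathbb E[P]=T/2$ regardless of the adaptively chosen phase lengths.
\end{itemize}
Subtracting the comparator's loss from the learner's gives the stated bound. When $Q+1>d$ the regret does not come from a perfect comparator. It comes from the learner paying $1/2$ on essentially every round, including the $d$ phases where the comparator is right.
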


In particular, for $d =1$ and for any $\epsilon \in (0,\frac{1}{2})$, a learner that uses $Q \leq T^{1/2-\epsilon}$ ERM queries incurs expected regret $\Omega(T^{1/2+\epsilon})$. If the number of queries is bounded by an absolute constant independent of $T$, then the regret is linear.
This lower bound is complementary to \cite{attias2025tradeoffs}. Their result gives a dimension-dependent lower bound that applies even to learners making an arbitrarily large finite number of ERM queries: some classes require regret $\Omega(\sqrt{T2^{\lit}})$. By contrast, \Cref{thm:const-lower-bound} is a query-budget lower bound: already for classes with $\lit=1$, any learner using $Q=o(\sqrt T)$ ERM queries suffers regret $\Omega(T/Q)=\omega(\sqrt T)$. Thus, our bound is stronger in the low-query regime, whereas \cite{attias2025tradeoffs} applies beyond this regime.

Hazan and Koren~\cite{hazan2016computational} prove a related lower bound in the finite $N$-expert optimizable-oracle model: vanishing regret requires $\widetilde{\Omega}(\sqrt N)$ total computation in the worst case. Their result does not directly subsume ours, since it is a total-runtime lower bound, whereas our lower bound isolates the number of ERM-oracle calls and leaves computation between calls uncharged. Thus the two results address related computational barriers under different complexity measures.

\begin{figure}[ht]
\centering
\resizebox{1.0\linewidth}{!}{
\begin{tikzpicture}[x=1.0cm, y=1.2cm, >=stealth, font=\small]

    \draw[->, thick, draw=black!80] (0, 0) -- (16.2, 0) node[right] {Time $t$};
    
    \foreach \x in {3.2, 4.8, 10.2, 11.8} {
        \draw[thick, red!80, densely dashed] (\x, 1.6) -- (\x, -1.8);
        \filldraw[red] (\x, 0) circle (2pt); 
        \node[red, above, align=center, scale=0.75] at (\x, 1.6) {\textbf{\texttt{L}: Query}\\(\texttt{Adv}: Reset)};
    }

    \draw[fill=blue!5, draw=blue!40, thick, rounded corners] (0.2, 0.2) rectangle (2.8, 1.2);
    \node at (1.5, 0.9) {\textbf{Phase 0}};
    \node[text=gray!80!black, scale=0.8] at (1.5, 0.5) {Fresh Blocks $B_1, B_2$};
    
    \draw[fill=green!10, draw=green!50!black, thick, rounded corners] (5.2, 0.2) rectangle (9.8, 1.2);
    \node[text=green!40!black] at (7.5, 0.9) {\textbf{Phase $i^*$} (Largest Positive Mass)};
    \node[text=gray!80!black, scale=0.8] at (7.5, 0.5) {Fresh Blocks $B_{2i^*+1}, B_{2i^*+2}$};

    \draw[fill=blue!5, draw=blue!40, thick, rounded corners] (12.2, 0.2) rectangle (15.6, 1.2);
    \node at (13.9, 0.9) {\textbf{Phase $Q$}};
    \node[text=gray!80!black, scale=0.8] at (13.9, 0.5) {Fresh Blocks $B_{2Q+1}, B_{2Q+2}$};

    \node[scale=1.5] at (4.0, 0.7) {$\cdots$};
    \node[scale=1.5] at (11.0, 0.7) {$\cdots$};

    \node[left, align=right] at (-0.1, -0.5) {\textbf{Learner}};
    \draw[<->, thick, gray] (0.2, -0.5) -- (2.8, -0.5) node[midway, below, text=black] {$\sim 1/2$ Error};
    \node[scale=1.5, text=gray] at (4.0, -0.5) {$\cdots$};
    \draw[<->, thick, gray] (5.2, -0.5) -- (9.8, -0.5) node[midway, below, text=black] {$\sim 1/2$ Error};
    \node[scale=1.5, text=gray] at (11.0, -0.5) {$\cdots$};
    \draw[<->, thick, gray] (12.2, -0.5) -- (15.6, -0.5) node[midway, below, text=black] {$\sim 1/2$ Error};

    \node[left, align=right, text=green!60!black] at (-0.1, -1.4) {\textbf{Best Expert}\\(Hypothesis $h_{2i^*+1}$)};
    \draw[->, thick, green!60!black] (0.2, -1.4) -- (2.8, -1.4) node[midway, below] {Predicts $0$};
    \node[scale=1.5, text=green!60!black] at (4.0, -1.4) {$\cdots$};
    \draw[->, ultra thick, green!60!black] (5.2, -1.4) -- (9.8, -1.4) node[midway, below] {\textbf{Perfect on Phase $i^*$}};
    \node[scale=1.5, text=green!60!black] at (11.0, -1.4) {$\cdots$};
    \draw[->, thick, green!60!black] (12.2, -1.4) -- (15.6, -1.4) node[midway, below] {Predicts $0$};

\end{tikzpicture}
}
\caption{Illustration of the adaptive reset lower bound.  The adversary uses two fresh
hidden blocks in each phase and starts a new phase whenever the learner spends
an ERM query.  Thus non-query rounds cost the learner expected loss \(1/2\).
In hindsight, the best singleton selects the phase with the most positive
examples and predicts \(0\) elsewhere, saving at least a \(1/(Q+1)\) fraction of
the total positive mass.}
\label{fig:lower_bound_illustration}
\end{figure}

\begin{proof}[Sketch of \Cref{thm:const-lower-bound}]
    For simplicity, we just outline the proof for $\lit = 1$, which can be generalized to general Littlestone classes -- see \Cref{app:lower-bound} for details. It suffices to consider the following subfamily of \(\lit=1\) classes.  For a
countable partition \(\mathcal P\) of \(\mathcal X\) into infinite blocks, let $\mathcal C_{\mathcal P}=\{\mathbf 1_B:B\in\mathcal P\}$. These block-singleton classes have Littlestone dimension \(1\): one point can be
shattered, but after observing a positive label, the target block is fixed.

The adversary runs phases.  In each phase it uses a fresh hidden pair of blocks
\(B_{2i+1},B_{2i+2}\), labels fresh points from \(B_{2i+1}\) by \(1\) and fresh points
from \(B_{2i+2}\) by \(0\), and starts a new phase whenever the learner queries.
By symmetry, on every non-query round the learner has no information about
whether the current fresh point lies in \(B_{2i+1}\) or \(B_{2i+2}\), so its expected
loss is \(1/2\).  Giving the learner zero loss on query rounds, $\mathbb E[L_{\mathrm{alg}}]\ge \frac{T-Q}{2}$.

Let \(P_i\) be the number of positive examples in phase \(i\), let
\(P=\sum_i P_i\), and let \(m\le Q+1\) be the number of phases.  The best
singleton chooses the positive block of the phase \(i^\star\) with largest
\(P_i\), so the loss of the best concept is at most $P-\max_i P_i \le P-\frac{P}{m} \le \frac{Q}{Q+1}P$.
Since \(\mathbb E[P]=T/2\), taking the difference gives expected regret $\Omega(T/Q)$ for \(Q=o(\sqrt T)\).
\end{proof}

\bibliographystyle{alpha}
\bibliography{refs}


\newpage
\appendix

\section{Additional Related Work}\label{app:additional-related-work}

\paragraph{Oracle-Efficient Online Learning.}
Oracle-efficient methods establish a powerful framework for addressing the computational challenges of online learning. This provides an alternative to the fact that standard online learning can be computationally intractable \cite{frances1998optimal,hasrati2023computable}. Assos et al. \cite{assos2023online} and Kozachinskiy et al. \cite{kozachinskiy2024simple} studied the online learnability of concept classes with finite Littlestone dimension using the ERM oracle. In addition, the partial-information setting, specifically contextual bandits, has been explored using regression oracles \cite{foster2020beyond,simchi2022bypassing} and the ERM oracle \cite{syrgkanis2016efficient}.

Theoretical limitations of oracle-efficient online learning have also been investigated. Hazan and Koren~\cite{hazan2016computational} considered prediction with a finite set of \(N\) experts in an oracle model where a black-box optimizer returns the empirical leader for any history. They gave a vanishing-regret algorithm with total running time \(\widetilde O(\sqrt N)\), and proved a matching lower bound up to logarithmic factors on the total running time in this oracle model. Thus their result is best viewed as a runtime separation between offline optimization and adversarial online learning for finite expert classes, rather than as a lower bound on the number of ERM-oracle calls. For finite classes whose size \(N\) is exponential in a relevant dimension parameter, this corresponds to an exponential dependence on that parameter; however, their lower bound is formulated in terms of \(N\) and total runtime.

Recent work of Attias et al.~\cite{attias2025tradeoffs} is closer to the oracle-access model considered here. They study online and transductive online learning when the learner interacts with the concept class only through ERM or weak-consistency oracles, and evaluate both mistakes/regret and oracle calls. In the standard online setting with ERM access, they show that finite oracle access cannot avoid exponential-in-\(\lit\) mistake/regret bounds, including \(\Omega(2^{\lit})\) mistakes in the realizable case and \(\Omega(\sqrt{T\,2^{\lit}})\) regret in the agnostic case. Our lower bounds are complementary: they are stated directly as query-budget tradeoffs, and apply even to classes with very small Littlestone dimension. Altogether, these findings underscore that offline optimization oracles do not, by themselves, remove the sequential or oracle-query complexity of fully adversarial online learning.

A separate line of work develops FTPL-style oracle-efficient algorithms for structured online decision problems. Kalai and Vempala~\cite{kalai2005efficient}, Dudík et al.~\cite{dudik2020oracle}, and Syrgkanis et al.~\cite{syrgkanis2016efficient} use offline optimization oracles together with problem-specific structure. These oracles are stronger than the weak consistency access considered here: in contextual learning, for example, the offline oracle returns a policy minimizing cumulative cost-sensitive loss on an arbitrary history, rather than merely deciding whether a labeled sample is realizable.

Syrgkanis et al.~\cite{syrgkanis2016efficient} studied oracle-efficient algorithms for adversarial contextual learning. Their FTPL-style algorithms access the policy class through an offline policy-optimization oracle and obtain sublinear regret for finite policy classes under additional structure: either in the transductive setting, where the learner knows the context set in advance, or in a non-transductive setting with a small separator. They also extend the transductive contextual-experts guarantee to infinite classes of bounded Natarajan dimension, using a Sauer--Shelah-type finite-projection argument. In the binary case, this gives a transductive upper bound for VC classes. This finite-projection idea is related to the transductive weak-oracle results of Attias et al.~\cite{attias2025tradeoffs}, but the oracle models differ: Syrgkanis et al. assume a more powerful optimization oracle over policies and losses, while Attias et al. show that, in the transductive binary setting, the induced labelings can be recovered using weak consistency queries. Our work uses the same weak-consistency perspective in the fully online agnostic setting.

This VC/Natarajan guarantee is inherently transductive. They show that in fully online adaptive settings, finite VC dimension alone is insufficient: even a VC-dimension-one threshold class can force linear regret. Our work is complementary. We assume finite Littlestone dimension, the relevant condition for fully online learnability, and use VC dimension only to bound the number of oracle-verified realizable prefixes maintained by the algorithm.

Complementary work obtains oracle efficiency under beyond-worst-case assumptions on the adversary. Haghtalab et al.~\cite{haghtalab2022oracle} gave oracle-efficient algorithms for smoothed online learning, where the adversary's instance distributions have bounded density relative to a base measure, and obtained regret bounds depending on the VC/pseudo-dimension and the smoothness parameter. Related work of Block et al.~\cite{block2024performance} shows that repeated ERM succeeds for well-specified smooth data, with error controlled by the usual statistical complexity of the class. These results show that smoothness can make VC-type complexity sufficient for efficient online learning. Our work is orthogonal: we allow fully adversarial instance sequences, assume finite Littlestone dimension for online learnability, and study the oracle-query complexity of achieving agnostic regret guarantees.

\paragraph{Weak Consistency Oracles.}
Daskalakis et al.~\cite{daskalakis2024efficient} introduced the \emph{weak consistency oracle} as a weaker alternative to ERM in distributional learning: given a labeled sample, the oracle only reports whether the sample is realizable by the class. They showed that this oracle suffices for realizable PAC learning of any binary VC class with polynomial sample and oracle complexity, and extended the framework to agnostic learning, multiclass classification, regression, and partial concept classes.

Weak consistency access has also been studied in online learning. Attias et al.~\cite{attias2025tradeoffs} showed that realizable online learning algorithms based on ERM access can be simulated using weak consistency queries, and studied related query/mistake tradeoffs in online and transductive online models. Our work continues this line in the fully online agnostic setting, using weak consistency queries to obtain no-regret guarantees whose oracle complexity is controlled by combinatorial parameters of the class.

\section{Additional Preliminaries}\label{app:preliminaries}
\begin{definition}[Littlestone Dimension \cite{littlestone1988learning}]
\label{def:lit-dim}
Given a concept class $\mathcal{C}$, a $d$-depth Littlestone tree is a set of $\lrset{x_{\mathbf{y}}: \mathbf{y} \in \mathcal{Y}^t, t \in \lrset{0,d-1}} \subset \mathcal{X}$ (interpreting $\mathcal{Y}^0 = \{()\}$, where for all $y_1, y_2, \ldots, y_d \in \mathcal{Y}$, there's a $c \in \mathcal{C}$ such that $(c(x_{()}), c(x_{y_1}), c(x_{y_{1:2}}), \ldots, c(x_{y_{1:(d-1)}})) = (y_1,y_2,\ldots,y_d)$. The Littlestone dimension of $\mathcal{C}$ is the maximum $n$ such that there exists a Littlestone tree of depth $n$.
\end{definition}

\begin{definition}[VC Dimension \cite{vapnik1971uniform}]
\label{def:vc-dim}
Given a concept class $\mathcal{C}$, a set of $n$ points $x_1, \ldots, x_n \subset \mathcal{X}$ is shattered if $\lrset{(c(x_1), \ldots, c(x_n)): c \in \mathcal{C}} = \lrset{0,1}^n$. The VC dimension of $\mathcal{C}$ is the largest $n$ such that there exist $n$ shattered points in $\mathcal{X}$.
\end{definition}


\section{Proofs for \Cref{sec:adept}}\label{app:adept}

The proof of \Cref{thm:main} proceeds by decomposing the analysis into four lemmas. First, we establish that the combinatorial future capacity (the weight assigned by \Cref{alg:bdpss} \citep{ben2009agnostic}) telescopes exactly over valid extensions (\Cref{lem:telescope}). Next, we prove that dynamically pruning unrealizable paths strictly improves the standard Hedge potential function (\Cref{lem:regret_preservation}). We then demonstrate that the true optimal hypothesis survives all pruning steps (\Cref{lem:golden_expert}), yielding the near-optimal regret bound. Finally, we bound the memory and computational complexity via VC theory (\Cref{lem:sauer}). We conclude the section by combining these lemmas to prove the main theorem.

\begin{lemma}[One-Sided Telescoping via Pascal's Identity]
\label{lem:telescope}
For any active prefix $v \in V_{t-1}$, adopt the convention that $W_t(v \circ b)=0$ whenever the extension $v\circ b$ is inactive (for example, because it is pruned by the weak consistency oracle or exceeds the mistake budget). Then the total combinatorial future weight of the surviving extensions is at most the weight of $v$:
\[ W_{t-1}(v) \ge W_t(v \circ 0) + W_t(v \circ 1) .\]
\end{lemma}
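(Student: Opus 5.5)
Since pruning can only set terms to zero and all weights are nonnegative, the plan is to prove the inequality for the unpruned extensions (in fact with equality) and then observe that pruning only lowers the right-hand side.

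Let $v\in V_{t-1}$ and set $k=k(v)\le M$. Let $p_t=\mathcal{A}(v,x_t)$ be the base learner's prediction along $v$. The two theoretical extensions carry mistake counts $k(v\circ p_t)=k$ and $k(v\circ(1-p_t))=k+1$. By definition,
\[
W_{t-1}(v)=\sum_{j=0}^{M-k}\binom{T-t+1}{j}.
\]
If both extensions were active, their weights would be
\[
\sum_{j=0}^{M-k}\binom{T-t}{j}
\quad\text{and}\quad
\sum_{j=0}^{M-k-1}\binom{T-t}{j},
\]
with the second sum empty when $k=M$.

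\textbf{Step 1 (Pascal).} Apply Pascal's identity $\binom{T-t+1}{j}=\binom{T-t}{j}+\binom{T-t}{j-1}$ termwise, with the convention $\binom{n}{-1}=0$. Summing over $j=0,\dots,M-k$ and reindexing the second part by $j'=j-1$ gives
\[
W_{t-1}(v)=\sum_{j=0}^{M-k}\binom{T-t}{j}+\sum_{j'=0}^{M-k-1}\binom{T-t}{j'}.
\]
This is exactly the sum of the two theoretical extension weights.

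The edge case $k=M$ is consistent with this. The flipped extension exceeds the budget and is pruned, so it is assigned weight $0$, and the second sum above is empty. Equality therefore still holds, with the flipped extension contributing $0$. Combinatorially, this is just the bijection between mistake schedules extending $v$ and the pair (schedules with $I_t=0$, schedules with $I_t=1$).

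\textbf{Step 2 (pruning).} The actual weights $W_t(v\circ b)$ under the lemma's convention equal either the theoretical value or $0$. Replacing a nonnegative term by $0$ can only decrease the sum, so $W_t(v\circ0)+W_t(v\circ1)\le W_{t-1}(v)$.

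The only real obstacle is bookkeeping. One must check that the budget-pruning convention matches the empty-sum convention when $k=M$, which Step 1 handles. One must also confirm that $\binom{T-t}{j}$ remains well defined when $t=T$, where $\binom{0}{j}$ vanishes for $j\ge1$; Pascal's identity still holds in this case.
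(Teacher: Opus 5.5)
Your proof is correct and follows essentially the same route as the paper: Pascal's identity gives equality between $W_{t-1}(v)$ and the sum of the two formal (unpruned) extension weights, and pruning only replaces nonnegative terms by $0$. The only cosmetic difference is that you absorb the boundary case $k(v)=M$ into the main computation through the convention $\binom{n}{-1}=0$ and an empty sum, whereas the paper handles $k(v)<M$ and $k(v)=M$ as separate cases.
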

\begin{proof} 
Let $p_t = \mathcal{A}(v, x_t) \in \{0,1\}$ be the base learner's prediction. First ignore realizability pruning and consider the two formal extensions $b = p_t$ and $b = 1-p_t$. For the extension matching the base learner ($b = p_t$), no mistake is charged, so $k(v \circ p_t) = k(v)$. For the extension differing from the base learner ($b = 1-p_t$), exactly one mistake is charged, so $k(v \circ (1-p_t)) = k(v)+1$.

Assume first that $k(v) < M$, and write $\widetilde{W}_t$ for the unpruned formal branch weights. Summing the combinatorial weights of these two formal branches yields:
\begin{align*}
\widetilde{W}_t(v \circ p_t) + \widetilde{W}_t(v \circ (1-p_t)) &= \sum_{j=0}^{M - k(v)} \binom{T-t}{j} + \sum_{j=0}^{M - k(v) - 1} \binom{T-t}{j}.
\end{align*}
To apply Pascal's Identity $\left[\binom{n}{j} + \binom{n}{j-1} = \binom{n+1}{j}\right]$, we extract the $j=0$ term from the first summation and re-index the second summation:
\begin{align*}
&= \binom{T-t}{0} + \sum_{j=1}^{M-k(v)} \binom{T-t}{j} + \sum_{j=1}^{M-k(v)} \binom{T-t}{j-1} \\
&= \binom{T-t}{0} + \sum_{j=1}^{M-k(v)} \left[ \binom{T-t}{j} + \binom{T-t}{j-1} \right] \\
&= \binom{T-t+1}{0} + \sum_{j=1}^{M-k(v)} \binom{T-t+1}{j} = \sum_{j=0}^{M-k(v)} \binom{T-t+1}{j}.
\end{align*}
This final expression is exactly $W_{t-1}(v)$. For the boundary case where $k(v) = M$, the mistaken formal branch $1-p_t$ has $k=M+1$ and contributes weight $0$, while the matching branch $p_t$ contributes $\binom{T-t}{0}=1$, again matching $W_{t-1}(v)=\binom{T-t+1}{0}=1$.

Thus, Pascal's identity gives equality for the two formal, unpruned extensions. In the actual active tree, however, an extension $v\circ b$ might be inactive and therefore assigned weight $0$; this can happen, for example, if the weak consistency oracle prunes it. Since making an extension inactive only removes a nonnegative term from the formal sum, the surviving active extensions satisfy
\[
    W_{t-1}(v) \ge W_t(v \circ 0) + W_t(v \circ 1).
\]
\end{proof}

\begin{lemma}[Regret Preservation under Pruning]
\label{lem:regret_preservation}
Let $Z_t = \sum_{u \in V_t} W_t(u) \exp(-\eta L_t(u))$ be the potential at round $t$. Let $\ell_t = \mathbb{P}(\hat{y}_t \neq y_t)$ be the algorithm's expected loss at round $t$. Then $Z_T \le Z_0 \exp\left(-\eta \sum_{t=1}^T \ell_t + \frac{T\eta^2}{8}\right)$.
\end{lemma}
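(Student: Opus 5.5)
The plan is to prove a one-round inequality $Z_t \le Z_{t-1}\exp(-\eta\ell_t + \eta^2/8)$ for each $t\in[T]$ and then telescope over $t=1,\dots,T$. For each round, introduce the intermediate potential
\[
Z'_t=\sum_{u\in V_t} W_t(u)\exp\bigl(-\eta L_{t-1}(u_{1:t-1})\bigr),
\]
which is exactly the normalizer $\sum_{u\in V_t} w_t(u)$ of the prediction rule in Step 3 of \Cref{alg:active_pruned}. The round then splits into a pruning step, which compares $Z'_t$ with $Z_{t-1}$, and a loss step, which compares $Z_t$ with $Z'_t$.

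\emph{Pruning step.} Every $u\in V_t$ has the form $v\circ b$ for a unique parent $v\in V_{t-1}$, and $L_{t-1}(u_{1:t-1})=L_{t-1}(v)$. Grouping the terms of $Z'_t$ by parent gives
\[
Z'_t=\sum_{v\in V_{t-1}}\exp(-\eta L_{t-1}(v))\bigl(W_t(v\circ 0)+W_t(v\circ 1)\bigr),
\]
with the convention that inactive extensions have weight $0$. By \Cref{lem:telescope}, each bracket is at most $W_{t-1}(v)$, so $Z'_t\le Z_{t-1}$.

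\emph{Loss step.} Since $L_t(u)=L_{t-1}(u_{1:t-1})+\mathbb{I}[u_t\neq y_t]$, we have
\[
Z_t/Z'_t=\mathbb{E}_{u\sim q_t}\bigl[\exp(-\eta X)\bigr],\qquad X=\mathbb{I}[u_t\neq y_t]\in[0,1],
\]
where $q_t(u)=w_t(u)/Z'_t$. By construction of Step 3, $\mathbb{E}_{q_t}[X]=\mathbb{P}(\hat y_t\neq y_t)=\ell_t$. Hoeffding's lemma applied to a $[0,1]$-valued variable then gives $Z_t\le Z'_t\exp(-\eta\ell_t+\eta^2/8)$. Combining with the pruning step gives the one-round inequality, and multiplying over $t=1,\dots,T$ yields the claim.

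I expect the only delicate points to be bookkeeping. First, the weights used for prediction must coincide with $q_t$; this holds by definition. Second, the adversary is adaptive, but $y_t$ is fixed before the learner's internal coin for round $t$, so the identity $\ell_t=\mathbb{E}_{q_t}[X]$ holds pathwise given $H_{t-1}$ and $y_t$. No expectation over the adversary is needed, because the whole inequality holds deterministically for every realized sequence.

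A degenerate case could arise if $V_t$ were empty, since then $Z'_t=0$ and the ratio $Z_t/Z'_t$ is undefined. In that case $Z_t=0$ and the inequality holds trivially. (In fact $V_t$ always contains the prefix of $v^*$, but this lemma does not need that.)
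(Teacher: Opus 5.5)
Your proposal is correct and follows the paper's proof essentially step for step. Both introduce the intermediate normalizer $Z'_t$, bound $Z'_t \le Z_{t-1}$ by grouping children under their parent and applying \Cref{lem:telescope}, use Hoeffding's lemma to get $Z_t \le Z'_t \exp(-\eta\ell_t + \eta^2/8)$, and telescope over $t$. Your remarks on the pathwise validity under an adaptive adversary and on the degenerate case $V_t=\emptyset$ are harmless refinements that the paper leaves implicit.
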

\begin{proof}
Let $Z_{t} = \sum_{v \in V_{t}} W_{t}(v) \exp(-\eta L_{t}(v))$. In round $t$, before the true label $y_t$ is revealed, the normalizer of the surviving active set $V_t$ is:
\[ Z'_t = \sum_{u \in V_t} W_t(u) \exp(-\eta L_{t-1}(u_{1:t-1})). \]
Since $V_t$ contains only the strictly realizable subset of all possible binary extensions from $V_{t-1}$, and weights are non-negative, \Cref{lem:telescope} guarantees:
\begin{align*}
Z'_t &\le \sum_{v \in V_{t-1}} \left( W_t(v \circ 0) + W_t(v \circ 1) \right) \exp(-\eta L_{t-1}(v))\\
&\le \sum_{v \in V_{t-1}} W_{t-1}(v) \exp(-\eta L_{t-1}(v)) \\
&= Z_{t-1}.
\end{align*}
Thus, pruning can only strictly decrease the normalizer: $Z'_t \le Z_{t-1}$. 

The algorithm predicts $\hat{y}_t$ using the probability distribution $P_t$ defined over $V_t$, proportional to $w_t(u) = W_t(u)\exp(-\eta L_{t-1}(u_{1:t-1}))$. Notice that $Z'_t = \sum_{u \in V_t} w_t(u)$ is exactly the normalizer of this distribution.
After observing $y_t$, the loss of branch $u$ is $\ell_t(u) = \mathbb{I}[u_t \neq y_t] \in \{0,1\}$. 
The expected loss of the algorithm is $\ell_t = \sum_{u \in V_t} \frac{w_t(u)}{Z'_t} \ell_t(u)$. 
Viewing the branch loss $\ell_t(u)$ as a random variable drawn from $P_t$, Hoeffding's Lemma for distributions bounded in $[0,1]$ guarantees:$$ \sum_{u \in V_t} \frac{w_t(u)}{Z'_t} \exp(-\eta \ell_t(u)) = \mathbb{E}_{u \sim P_t}\big[\exp(-\eta \ell_t(u))\big] \le \exp\left(-\eta \ell_t + \frac{\eta^2}{8}\right).$$
Multiplying by $Z'_t$ and noting that $w_t(u) \exp(-\eta \ell_t(u)) = W_t(u) \exp(-\eta L_t(u))=Z_t$ gives the updated potential $Z_t$:
\[ Z_t \le Z'_t \exp\left(-\eta \ell_t + \frac{\eta^2}{8}\right) \le Z_{t-1} \exp\left(-\eta \ell_t + \frac{\eta^2}{8}\right).\]
Telescoping this inequality iteratively from $t=1$ to $T$ yields $Z_T \le Z_0 \exp\left(-\eta \sum_{t=1}^T \ell_t + \frac{T\eta^2}{8}\right)$.
\end{proof}

\begin{lemma}[Survival of the Optimal Expert]
\label{lem:golden_expert}
The algorithm achieves an expected agnostic regret bound of $\mathcal{O}\left( \sqrt{T \cdot M \log T} \right)$ against any hypothesis in $\mathcal{C}$.
\end{lemma}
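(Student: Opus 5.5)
The plan is to combine the upper bound on the potential from \Cref{lem:regret_preservation} with a lower bound coming from a single surviving path, the label sequence of a fixed comparator. Fix any $h\in\mathcal{C}$ (in particular the best concept in hindsight on the realized transcript). Let $v^h=(h(x_1),\dots,h(x_T))$.

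\textbf{Step 1: $v^h$ survives.} By induction on $t$, the prefix $v^h_{1:t}\in V_t$.

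\begin{itemize}
\item \emph{Realizability pruning.} The dataset $\{(x_i,h(x_i))\}_{i\le t}$ is consistent with $h\in\mathcal{C}$. Hence the weak consistency oracle never discards $v^h_{1:t}$.
\item \emph{Mistake pruning.} The counter $k(v^h_{1:t})$ counts exactly the rounds $s\le t$ at which the deterministic base learner $\mathcal{A}$, fed the realizable sequence $(x_s,h(x_s))_{s<t}$, errs on $x_s$. Since $\mathcal{A}$ makes at most $M$ mistakes on any realizable sequence, $k\le M$.
\end{itemize}

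The point to check carefully is that the state $\mathcal{A}(v,x_t)$ depends only on the prefix $v$ and the instances. This holds because $\mathcal{A}$ is deterministic and is simulated along the path. It also holds under an adaptive adversary, since the argument is pathwise for whatever $x_{1:T},y_{1:T}$ are realized. Thus $v^h\in V_T$ with $W_T(v^h)=\binom{0}{0}=1$.

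\textbf{Step 2: the two bounds on $Z_T$.} For the lower bound, nonnegativity of the weights gives $Z_T\ge \exp(-\eta L_T(v^h))$, where $L_T(v^h)=\sum_t\mathbb{I}[h(x_t)\ne y_t]$. For the upper bound, \Cref{lem:regret_preservation} gives $Z_T\le Z_0\exp(-\eta\sum_t\ell_t+T\eta^2/8)$. Here $Z_0=W_0(\emptyset)=\sum_{j=0}^M\binom{T}{j}\le (eT/M)^M$ by the standard binomial-sum estimate (for $M\le T$; otherwise $Z_0\le 2^T$ and the bound is trivial).

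Taking logarithms and combining the two bounds yields
\[\sum_{t=1}^T \ell_t - L_T(v^h)\le \frac{M\ln(eT/M)}{\eta}+\frac{T\eta}{8}.\]
Plugging in $\eta=\sqrt{8M\ln(eT/M)/T}$ gives $\sqrt{TM\ln(eT/M)/2}=\mathcal{O}(\sqrt{TM\log T})$.

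\textbf{Step 3: pass to expected regret.} Here $\ell_t=\mathbb{P}(\hat y_t\ne y_t\mid H_{t-1},x_t,y_t)$ is the conditional probability given the history. The adversary's choice of $(x_t,y_t)$ is independent of the round-$t$ coin, so $\mathbb{E}[\sum_t\mathbb{I}[\hat y_t\neq y_t]]=\mathbb{E}[\sum_t\ell_t]$. The inequality above holds pathwise for every $h$, and hence for the minimizer over $\mathcal{C}$ on the realized transcript. Taking expectations then gives the claim.

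\textbf{Main obstacle.} I expect the main obstacle to be Step 1's interaction with adaptivity and infinite $\mathcal{C}$. The comparator is chosen in hindsight, so the argument must apply simultaneously to all $h$. It does, because survival is a deterministic property of the realized instance sequence. The minimum over $\mathcal{C}$ is attained since $\{(h(x_t))_t\}$ is a finite set.
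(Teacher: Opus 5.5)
Your proposal is correct and follows the paper's proof essentially step for step. Like the paper, you show that the comparator's label path $v^h$ survives both the realizability filter and the mistake-budget filter, use $W_T(v^h)=1$ to get $Z_T\ge e^{-\eta L_T(h)}$, combine this with \Cref{lem:regret_preservation} and $Z_0\le (eT/M)^M$, and tune $\eta$. Your extra points are care the paper leaves implicit: interpreting $\ell_t$ as a conditional probability given the history, the pathwise validity for all $h$ at once under an adaptive adversary, and the $M>T$ edge case.
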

\begin{proof}
Let $c^* = \arg\min_{h \in \mathcal{C}} L_T(h)$ be the true optimal hypothesis in hindsight. The optimal comparison pseudo-label sequence is defined as $v^* = (c^*(x_1), \dots, c^*(x_T))$. 
Since $c^*$ belongs to the concept class $\mathcal{C}$, the sequence $v^*$ is exactly the labeling produced by a valid hypothesis on the data sequence. Consequently, at any round $t$, the prefix $v^*_{1:t}$ is strictly realizable on $x_{1:t}$, and the weak consistency oracle will never return \texttt{UNREALIZABLE} when queried on $v^*_{1:t}$.
Furthermore, because $\mathcal{A}$ guarantees at most $M$ mistakes on any realizable sequence, the number of disagreements between $\mathcal{A}$ and $c^*$ is bounded by $M$. Thus, $k(v^*_{1:t}) \le M$ for all $t$. 

Therefore, the prefix $v^*$ never triggers the realizability or the mistake-bound pruning conditions, ensuring $v^* \in V_T$. 
Since $v^*$ survives, we can lower bound the final potential. 
At the final round ($t=T$), the remaining future capacity of the tree is zero ($T-T=0$). By the definition of the binomial coefficient, $\binom{0}{0} = 1$ and $\binom{0}{j} = 0$ for all $j \ge 1$. Since the optimal sequence never exceeds the mistake budget ($k(v^*) \le M$), the summation safely includes the $j=0$ term and perfectly collapses:
\[
    W_T(v^*) = \sum_{j=0}^{M-k(v^*)} \binom{0}{j} = \binom{0}{0} + 0 + \dots = 1.
\]

Furthermore, because $v^*$ is exactly the sequence of predictions generated by the true optimal hypothesis $c^*$, its empirical cumulative loss is identically $L_T(v^*) = L_T(c^*)$. Since $v^*$ is guaranteed to survive pruning and all terms in the potential $Z_T$ are non-negative, we can cleanly lower-bound the final potential by the weight of this single surviving sequence:
\[
    Z_T = \sum_{u \in V_T} W_T(u) \exp(-\eta L_T(u)) \ge W_T(v^*) \exp(-\eta L_T(v^*)) = 1 \cdot \exp(-\eta L_T(c^*)).
\]
The initial normalizer is the sum of all combinatorial paths bounded by $M$ mistakes:
\[ Z_0 = \sum_{j=0}^M \binom{T}{j} \le \left(\frac{eT}{M}\right)^M .\]
Let $L_{\text{algo}} = \sum_{t=1}^T \ell_t$. Combining the upper bound from on $Z_t$ from \Cref{lem:regret_preservation} with the lower bound yields:
\[ \exp(-\eta L_T(c^*)) \le \left(\frac{eT}{M}\right)^M \exp\left(-\eta L_{\text{algo}} + \frac{T\eta^2}{8}\right) .\]
Taking the logarithm and rearranging terms:
\[ L_{\text{algo}} - L_T(c^*) \le \frac{M \ln(eT/M)}{\eta} + \frac{\eta T}{8} .\]
Substituting the optimal learning rate $\eta = \sqrt{\frac{8 M \ln(eT/M)}{T}}$ yields the expected regret bound:
\[ \mathbb{E}[\text{\normalfont Regret}_T] \le \sqrt{\frac{1}{2} T M \ln\left(\frac{eT}{M}\right)} = \mathcal{O}\left(\sqrt{T \cdot M \log T} \right) .\]
\end{proof}

\begin{lemma}[Active Tree Width Is Bounded via Sauer’s Lemma]
\label{lem:sauer}
At round $t$, the active set size $|V_{t-1}|$ is bounded by $\mathcal{O}(t^{{\vc}})$, and the algorithm issues a total of $\mathcal{O}(t^{{\vc}})$ queries to the weak consistency oracle for realizability pruning.
\end{lemma}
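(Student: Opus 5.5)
The plan is to show that every element of $V_{t-1}$ is a distinct realizable dichotomy of $x_{1:t-1}$, and then count such dichotomies with Sauer's Lemma. The argument has three steps.

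\emph{Step 1: every active prefix is realizable.} I will prove by induction on $s$ that every $u \in V_s$ is a labeling $(u_1,\dots,u_s)$ of $(x_1,\dots,x_s)$ that is realizable by some $c\in\mathcal{C}$. The base case is trivial, since $V_0=\{\emptyset\}$. For the inductive step, note that any $u\in V_s$ was added only after the weak consistency oracle accepted the full dataset $S=\{(x_1,u_1),\dots,(x_s,u_s)\}$. By \Cref{def:oracle-types}, acceptance means some $c\in\mathcal{C}$ satisfies $c(x_i)=u_i$ for all $i\le s$. So realizability is certified directly by the oracle at insertion time; no separate argument about the parent is needed.

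\emph{Step 2: distinct prefixes give distinct dichotomies.} The sets $V_s$ contain label vectors, and each $u$ is generated by a unique parent and bit. Hence the map $u\mapsto (u_1,\dots,u_s)$ is injective into $\Pi_{\mathcal{C}}(x_1,\dots,x_s)=\{(c(x_1),\dots,c(x_s)):c\in\mathcal{C}\}$.

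One subtlety arises if some $x_i$ coincide. A realizable labeling must then agree on the repeated points, so the realizable labelings of the sequence correspond bijectively to dichotomies of the underlying set of distinct points. That set has at most $s$ elements. Counting over the set therefore only helps, and the bound below is unaffected.

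\emph{Step 3: apply Sauer's Lemma.} Since $\mathcal{C}$ has VC dimension $\vc$, Sauer's Lemma gives
\[
|V_{t-1}| \le |\Pi_{\mathcal{C}}(x_1,\dots,x_{t-1})| \le \sum_{i=0}^{\vc}\binom{t-1}{i}.
\]
For $t-1\ge \vc$ this is at most $(e(t-1)/\vc)^{\vc}$, and in all cases it is at most $t^{\vc}$ (for $\vc\ge 1$), so $|V_{t-1}|=\mathcal{O}(t^{\vc})$. In round $t$, Step 2 of \Cref{alg:active_pruned} issues exactly two realizability queries per parent $v\in V_{t-1}$. These are the only queries made for realizability pruning, so their total number is $2|V_{t-1}|=\mathcal{O}(t^{\vc})$. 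Base-learner queries are accounted for separately via $Q_{\mathcal{A}}(t)$.

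I do not expect a real obstacle here. The only points that need care are, first, that the oracle is always queried on the entire prefix rather than just the new point, which is what makes Step 1 immediate; and second, the repeated-instance caveat in Step 2, which is handled by counting dichotomies on the set of distinct points.
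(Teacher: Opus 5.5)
Your proposal is correct and follows the paper's proof essentially step for step: the oracle certifies each full prefix when it is inserted, so $V_{t-1}$ is a subset of the projection of $\mathcal{C}$ onto $x_{1:t-1}$; Sauer's Lemma then bounds $|V_{t-1}|$ by $\sum_{i=0}^{\vc}\binom{t-1}{i}$; and Step 2 of \Cref{alg:active_pruned} issues exactly $2|V_{t-1}|$ realizability queries. Your treatment of repeated instances and the explicit bound $\sum_{i=0}^{\vc}\binom{t-1}{i}\le t^{\vc}$ are small refinements that the paper leaves implicit.
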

\begin{proof}
At the beginning of round $t$, the active set $V_{t-1}$ contains only pseudo-label sequences $v$ of length $t-1$. By the construction of the algorithm, a sequence $v$ is only added to $V_{t-1}$ if the oracle verified it was strictly realizable on the data sequence $x_{1:t-1}$. 
Therefore, every element $v \in V_{t-1}$ corresponds to a distinct dichotomy over $x_{1:t-1}$ that can be realized by at least one hypothesis in $\mathcal{C}$. Thus, $V_{t-1}$ is precisely a subset of the projection of $\mathcal{C}$ onto the points $(x_1, \dots, x_{t-1})$.

By Sauer's Lemma, the maximum number of realizable dichotomies on any sequence of $t-1$ points by a class with VC dimension ${\vc}$ is strictly bounded by:
\[ |V_{t-1}| \le \sum_{i=0}^{{\vc}} \binom{t-1}{i} = \mathcal{O}(t^{{\vc}}) .\]
This structurally bounds the number of experts maintained in memory to $\mathcal{O}(t^{{\vc}})$. In round $t$, for each prefix in $V_{t-1}$, the reduction proposes exactly $2$ extensions ($b \in \{0,1\}$) and queries the weak consistency oracle once for each. Thus, the total number of oracle calls incurred by the adaptive reduction is exactly $2 |V_{t-1}| \le \mathcal{O}(t^{{\vc}})$. When incorporating the base learner $\mathcal{A}$ making $Q_{\mathcal{A}}(t)$ queries per state, the total global bound on oracle queries becomes $\mathcal{O}(t^{{\vc}} \cdot (1 + Q_{\mathcal{A}}(t)))$.
\end{proof}

\begin{proof}[of \Cref{thm:main}]
The theoretical guarantees of the reduction follow immediately from combining the preceding lemmas. By \Cref{lem:golden_expert}, the algorithm's expected regret against any hypothesis in $\mathcal{C}$ is bounded by $\mathcal{O}(\sqrt{T \cdot M \log T})$, which provides the optimal statistical bound. Simultaneously, \Cref{lem:sauer} ensures that the geometric pruning successfully bounds the space complexity and the per-round oracle query complexity to $\mathcal{O}(t^{\vc} \cdot (1 + Q_{\mathcal{A}}(t)))$. Combining these statistical and structural guarantees establishes the complete statement of \Cref{thm:main}.
\end{proof}

\begin{proof}[of \Cref{cor:oracle_efficiency}]
The regret bound follows by substituting the realizable mistake bound $M=2^{\mathcal{O}(\lit)}$ into the regret part of \Cref{thm:main}. For the query bound, we use the special form of the weak-consistency queries rather than applying the black-box query bound of \Cref{thm:main}.

Fix a round $t$. By construction, every active prefix $v\in V_{t-1}$ is a realizable labeling of the previously observed points $x_{1:t-1}$. For the realizable learner used here, when it is run from such a prefix on the current instance $x_t$, every weak-consistency query is to the active prefix itself or to one of its two one-step extensions,
\[
    \big((x_1,v_1),\ldots,(x_{t-1},v_{t-1}),(x_t,b)\big),
    \qquad b\in\{0,1\}.
\]
The active prefix has already been certified realizable, and repeated calls to the same labeled sample are charged only once. Hence the number of new distinct oracle inputs on round $t$ is at most $2|V_{t-1}|$. By Sauer's Lemma,
\[
    |V_{t-1}|
    \le
    \sum_{i=0}^{\vc}\binom{t-1}{i}
    =
    \mathcal{O}(t^{\vc}),
\]
so the round-$t$ query cost is $\mathcal{O}(t^{\vc})$. Summing over rounds gives
\[
    \sum_{t=1}^T \mathcal{O}(t^{\vc})
    =
    \mathcal{O}(T^{\vc+1}).
\]
\ar{technically arguing about the oracle complexity via first principles (rather than using the theorem), and are only using the theorem to obtain the regret bound. i guess this works, but open to other ways too}
\end{proof}

\begin{proof}[of \Cref{cor:memory_soa}]
The Standard Optimal Algorithm (SOA) \citep{littlestone1988learning} guarantees a realizable mistake bound of $M = {\lit}$. Substituting $M = {\lit}$ into the statistical bound of \Cref{thm:main} yields an expected agnostic regret of $\mathcal{O}(\sqrt{T \cdot {\lit} \log T}) = \tilde{\mathcal{O}}(\sqrt{T \cdot {\lit}})$. The spatial bounding mechanism of \Cref{thm:main} depends only on the VC dimension $\vc$ and not on $M$, strictly preserving the $\mathcal{O}(t^{\vc})$ dynamic memory footprint.
\end{proof}

\begin{proof}[of \Cref{cor:first_order}]
The classical reduction of \Cref{alg:bdpss} constructs an explicit ensemble of abstract experts. Each expert $E$ represents a fixed mistake schedule for the internal base learner over the $T$ rounds. Since the base learner has a realizable mistake bound of $M$, the total number of valid experts is bounded by:
\[
    N = \sum_{j=0}^M \binom{T}{j} \le \left(\frac{eT}{M}\right)^M.
\]
Let $\mathcal{E}$ denote this abstract set of $N$ experts.

In the adaptive exponential-weights form used here, the unnormalized weight of an expert $E \in \mathcal{E}$ before predicting at time $t$ is exactly $\exp(-\eta_t L_{t-1}(E))$, where $L_{t-1}(E)$ is the cumulative prediction loss of the expert against the true labels $y_{1:t-1}$. In our active tree, a child sequence $u \in V_t$ represents a massive bundle of full, $T$-round experts that share the exact same sequence of pseudo-label predictions up to time $t$. Thus, every full expert $E$ extending the prefix $u$ has the exact same empirical loss on the past history: $L_{t-1}(E) = L_{t-1}(u_{1:t-1})$. 

Consequently, the total probability mass over the bundle of experts extending $u$ factors out algebraically:
\[
    \sum_{E \text{ extending } u} \exp\big(-\eta_t L_{t-1}(E)\big) = \exp\big(-\eta_t L_{t-1}(u_{1:t-1})\big) \sum_{E \text{ extending } u} 1.
\]
The combinatorial sum $\sum_{E \text{ extending } u} 1$ is exactly the number of valid future mistake schedules remaining for this specific path. This is precisely the weight (representing the future capacity) computed natively by our algorithm: $W_t(u) = \sum_{j=0}^{M - k(u)} \binom{T-t}{j}$, where $k(u)$ is the number of mistakes the internal base learner has made when tracking the sequence $u$. 

Because the combinatorial weight $W_t(u)$ is  independent of the learning rate $\eta_t$, updating the learning rate dynamically at each round perfectly preserves the exact marginalization. Therefore, our geometrically pruned tree outputs the same predictive distribution as the explicit adaptive Exponential Weights algorithm over $\mathcal E$, with experts deleted once their induced pseudo-label prefix is pruned.

By standard analysis of adaptive Exponential Weights tuned to the empirical best loss \citep{auer2002adaptive, cesa2007improved}, setting $\eta_t = \min\left\{ 1/2, \sqrt{\frac{\ln N}{L_{t-1}^* + 1}} \right\}$ guarantees an expected regret over the finite expert class $\mathcal{E}$ bounded by:
\[
    \mathbb{E}[\text{\normalfont Regret}_T] \le \mathcal{O}\left( \sqrt{L^* \ln N} + \ln N \right),
\]
where $L^*$ is the loss of the optimal expert $E^* \in \mathcal{E}$. Because dynamically pruning unrealizable branches strictly removes invalid experts, it only decreases the Hedge potential function (as formally shown in \Cref{lem:regret_preservation}), which strictly improves the standard potential-based upper bound on the algorithm's loss, so the above regret bound holds here too. As proven in \Cref{lem:golden_expert}, there exists an expert in $\mathcal{E}$ that perfectly tracks the optimal hypothesis $c^* \in \mathcal{C}$, meaning $L^*$ is bounded by the true loss of $c^*$.

Substituting $\ln N \le M \ln(eT/M) = \mathcal{O}(M \log T)$ directly yields the claimed statistical bound. 
\end{proof}


\section{Proof of \Cref{thm:pareto}}
\label{app:pareto}

To prove the theorem, we use tools from \cite{alon2021adversarial} about adversarial laws of large numbers to analyze how an algorithm run on a subsample still performs well on the entire sequence, and we state their results here for completeness. The setting is as follows:

Given a domain $X$, an adversary and sampler spend $T$ rounds as follows:

For $t = 1 \ldots T$,
\begin{itemize}
\item The adversary presents a point $x_t \in X$.
\item The sampler decides whether or not the point should be in the sample.
\item The sampler's decision is revealed to the adversary.
\end{itemize}

We also adopt the definition of an $\epsilon$-approximation from \cite{alon2021adversarial}, as follows:

\begin{definition}[$\epsilon$-approximation]
    \label{def:eps-approx}
Let $\bar z=(z_1,\ldots,z_T)$ be a sequence in $\mathcal{Z}$, let
$I \subseteq [T]$ be a sampled set of indices, and let
$\bar s=(z_t)_{t\in I}$ be the corresponding subsequence. For a set system
$\mathcal{E}\subseteq 2^{\mathcal{Z}}$, we say that $\bar s$ is an
$\epsilon$-approximation of $\bar z$ with respect to $\mathcal{E}$ if
\[
    \sup_{E \in \mathcal{E}}
    \left|
    \frac{1}{|I|}\sum_{t\in I}\mathbbm{1}[z_t\in E]
    -
    \frac{1}{T}\sum_{t=1}^T\mathbbm{1}[z_t\in E]
    \right|
    \le \epsilon .
\]
Equivalently, the empirical frequency of every set $E\in\mathcal{E}$ on the
sample uniformly approximates its frequency on the full adversarial stream.
\end{definition}

For a set system $\mathcal{E}\subseteq 2^{\mathcal{Z}}$, we define
$\operatorname{Ldim}(\mathcal{E})$ to be the Littlestone dimension of the
associated binary class
\[
    \{ \mathbbm{1}_{E} : E\in\mathcal{E} \}
    \subseteq \{0,1\}^{\mathcal{Z}}.
\]

We use the following theorem of \cite{alon2021adversarial}.

\begin{lemma}[Adversarial ULLN, \cite{alon2021adversarial}]
\label{lem:adv-ulln}
Let $\mathcal{E}\subseteq 2^{\mathcal{Z}}$ have Littlestone dimension $d$.
There is a universal constant $C>0$ such that, for every
$\epsilon,\delta\in(0,1)$, a uniform sample of size
\[
    k \ge
    C\frac{d+\log(1/\delta)}{\epsilon^2}
\]
is an $\epsilon$-approximation to any adversarially generated stream
$\bar z=(z_1,\ldots,z_T)$ with probability at least $1-\delta$.
\end{lemma}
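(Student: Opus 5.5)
Since \Cref{lem:adv-ulln} is quoted from \cite{alon2021adversarial}, in the paper we can simply invoke it. If we want a self-contained argument, I would prove it by a martingale concentration step for a single set, followed by a sequential-covering argument that makes the bound uniform over $\mathcal{E}$.

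\emph{Step 1 (Bernoulli sampling).} First I would replace the uniform $k$-subset sampler by independent Bernoulli sampling with rate $p=k/T$. The sample size then concentrates around $k$, and conditioning on $|I|$ recovers uniform sampling up to constant factors. Fix a set $E$ and define the process
\[
D_t(E)=\sum_{s\le t}\big(\mathbbm{1}[s\in I]-p\big)\mathbbm{1}[z_s\in E].
\]
The adversary commits to $z_s$ before the sampler's coin for round $s$ is flipped. Hence $D_t(E)$ is a martingale with increments bounded by $1$ and conditional variance at most $p$. Freedman's inequality then gives $|D_T(E)|\le \epsilon k$ except with probability $\exp(-c\epsilon^2 k)$. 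Dividing by $|I|\approx k=pT$, this is exactly the $\epsilon$-approximation condition for the single set $E$.

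\emph{Step 2 (uniformity via online covers).} We cannot union bound over an infinite $\mathcal{E}$, and a static cover fails because the stream is adaptive. Instead I would use the Ben-David--P\'al--Shalev-Shwartz expert construction that already appears in \Cref{alg:bdpss}. Run SOA for the class $\{\mathbbm{1}_E\}$ (Littlestone dimension $d$) with all mistake schedules of size at most $d$. This gives at most $(eT/d)^d$ online ``dynamic sets'' $\hat E$, each chosen predictably: membership of $z_t$ in $\hat E$ is decided before the round-$t$ coin is flipped. Every true $E$ is tracked exactly by one such $\hat E$ on the realized stream. Step 1 applies to every predictable dynamic set. A union bound over them therefore gives the uniform guarantee with
\[
k\gtrsim \frac{d\log(T/d)+\log(1/\delta)}{\epsilon^2}.
\]

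\emph{Step 3 (removing the $\log T$).} This is the main obstacle, since the lemma claims a $T$-free bound $d+\log(1/\delta)$. My first attempt would be to apply the covering at scale $\epsilon$ rather than exactly. Approximate each $E$ only on the sample-relevant part of the stream, using the sampled rounds (about $k$ of them) as the effective horizon. This replaces $\log T$ by $\log(1/\epsilon)$. Then I would remove the remaining log via a chaining argument over scales $2^{-j}$, using fractional/approximate online covers whose size at scale $\gamma$ is $(1/\gamma)^{O(d)}$. An equivalent alternative is to go through symmetrization and the sequential Rademacher complexity bound $O(\sqrt{d n})$ for Littlestone classes, which also yields the $\sqrt{d/k}$ rate. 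For the application in \Cref{thm:pareto}, the logarithmic version from Step 2 already suffices up to log factors. So if the chaining step proves delicate, I would fall back to that version and absorb the extra $\log T$ into the stated bound.
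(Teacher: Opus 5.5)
The paper does not prove this lemma. It imports it verbatim from \cite{alon2021adversarial} and uses it as a black box, so your opening remark is exactly what the paper does.

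Your Steps~1--2 are sound, but they prove only the weaker statement with $d\log(eT/d)$ in place of $d$. Each SOA-with-mistake-schedule dynamic set decides membership of $z_t$ from $z_{1:t}$ alone, so it is predictable and Freedman applies to it. The union bound then runs over a fixed family of $\sum_{j\le d}\binom{T}{j}$ processes. The Bernoulli-to-uniform step is fine in that regime, but it is not ``up to constant factors'' in general. Conditioning on $|I|=k$ reproduces the uniform sampler exactly, since every decision sequence of weight $k$ is equally likely whatever the adversary does. However, it inflates the failure probability by $1/\Pr(|I|=k)$, which is of order $\sqrt{k}$ for $k\le T/2$. That factor is absorbed by $d\log T$, but in a $T$-free bound it adds roughly $\log(1/\epsilon)$.

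The genuine gap is Step~3, which is where all of the lemma's content lies.
\begin{itemize}
\item As literally described, covering $E$ only on the $\approx k$ sampled rounds does not control $D_T(E)=(1-p)\sum_{s\in I}\mathbbm{1}[z_s\in E]-p\sum_{s\notin I}\mathbbm{1}[z_s\in E]$. The unsampled-round term can be as large as $p(T-k)\approx k$. A dynamic set must commit to a membership decision on every round before the sampling bit is revealed, so its errors on unsampled rounds can move $D_T$ by far more than $\epsilon k$.
\item Replacing $\log T$ by $\log(1/\epsilon)$ needs an extra mechanism. One option is to write Bernoulli$(p)$ as $\log_2(1/p)$ successive Bernoulli$(1/2)$ thinnings. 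You then apply Step~2 to each stage, with the previous stage's sample as the stream, and sum the deviations; these grow geometrically toward the last stage, so the last stage dominates.
\item Removing the remaining $\log(1/\epsilon)$ requires $T$-independent approximate (fractional) sequential covers of size $(1/\gamma)^{O(d)}$, plus chaining over them. That is exactly the new technical ingredient of \cite{alon2021adversarial}, and your proposal assumes it rather than proving it.
\item The sequential-Rademacher alternative does not help. The log-free $O(\sqrt{dn})$ bound for Littlestone classes was itself obtained as a consequence of the cited result (covering-number bounds lose a $\sqrt{\log n}$), so invoking it is circular unless you supply an independent proof. Also, the sampling noise $\mathbbm{1}[t\in I]-p$ is not Rademacher, so the symmetrization step would need its own argument.
\item Your fallback does not prove the lemma as stated. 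The stated bound is independent of $T$, and $T$ can be arbitrarily large relative to $k$, so a $\log T$ cannot be absorbed into $d+\log(1/\delta)$.
\end{itemize}

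Your closing remark, by contrast, is correct, and loses not even a log factor. The proof of \Cref{thm:pareto} already takes $\delta=(eT)^{-1}$. Applying your Step~2 version to $\mathcal{E}_A$ (Littlestone dimension at most $\lit+1$) therefore only forces $\epsilon\asymp\sqrt{(\lit+2)\log(eT)/K}$. The final bound becomes $O\big(T^{1-c/2}\sqrt{(\lit+M+1)\log(eT)}\big)$. Every deterministic realizable learner has $M\ge\lit$, since the adversary can walk down a Littlestone tree against it. Hence this bound coincides with the stated $O\big(T^{1-c/2}\sqrt{\lit+(M+1)\log(eT)}\big)$. So the paper's application can rest on your Steps~1--2 alone; only the lemma in its literal $T$-free form needs the citation.
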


We will now apply this to our setting, where the ``adversary'' corresponds to the instances $x_t$, the predictions of the learner $\hat{y}_t$ and the true labels $y_t$, and the ``sampler'' corresponds to the timesteps where the learner changes their state. The idea is that predicting on $T$ points (where some of them are in a sample), and only updating the state of the learner if it was in the sample, will allow the regret guarantees from a smaller sample to extend to the entire sequence. This is formalized via the following definition:

\begin{definition}[Sample-Blind Lazy Simulation]
\label{def:sample-blind-lazy}
An online learner $A$ is a \textit{sample-blind lazy simulation} of a $k$-round learner $B$ if it uses a random sampled set $I \subseteq [T]$, $|I| = k$, and satisfies:
\begin{enumerate}
\item Before predicting at time $t$, $A$ does not know whether $t \in I$.
\item $A$'s internal copy of $B$ is updated only after rounds $t \in I$.
\item On the sampled rounds, the internal state and predictions coincide with running $B$ on the sampled subsequence $(x_t, y_t)_{t \in I}$ in chronological order.
\end{enumerate}
\end{definition}

Below, we describe the precise lazy version of \texttt{ADEPT} used in the subsampling argument. A subtlety is that \texttt{ADEPT} uses oracle calls while constructing its prediction: before predicting, it tentatively extends every active prefix, queries the oracle to remove unrealizable extensions, computes the corresponding mistake counters and weights, and only then predicts from the resulting weighted active set. Thus, in the lazy simulation, these computations must be performed speculatively on every round. If the round is sampled, we commit the tentative state; otherwise, we discard it and revert to the previous committed state. The rollback affects only the learner's internal state; oracle calls made during the speculative prediction are still counted.

\paragraph{Lazy rollback version of ADEPT.}
Let $K\le T$ be the desired number of sampled updates. We run \texttt{ADEPT} internally
as a $K$-round algorithm. Thus the BPSS future-capacity weights and the learning
rate are computed with horizon $K$, not horizon $T$. Let
\[
    I\subseteq[T],
    \qquad |I|=K,
\]
be a uniformly random sampled set. The prediction rule is not allowed to inspect
whether $t\in I$ before predicting.

The learner maintains a committed \texttt{ADEPT} state after the sampled rounds observed
so far. Let
\[
    r_t = |I\cap\{1,\ldots,t-1\}|
\]
be the number of committed sampled updates before round $t$. The committed state
consists of the active set $V_{r_t}$, the mistake counters $k(v)$ for
$v\in V_{r_t}$, and the true-loss counters $L_{r_t}(v)$.

At full-horizon round $t$:

\begin{enumerate}
    \item The learner receives $x_t$.

    \item Starting from the committed state at internal time $r_t$, the learner
    performs a \emph{speculative \texttt{ADEPT} prediction step} on $x_t$. That is, it
    runs Steps 1--3 of Algorithm~2 as if $x_t$ were the next internal \texttt{ADEPT}
    round, namely round $r_t+1$ of a $K$-round execution:
    \begin{enumerate}
        \item initialize a tentative next active set
        $V^{\mathrm{tmp}}_{r_t+1}=\emptyset$;
        \item for every active prefix $v\in V_{r_t}$ and every proposed
        extension bit $b\in\{0,1\}$, query the weak consistency oracle on the
        labeled sequence consisting of the committed sampled history encoded by
        $v$, together with the proposed pair $(x_t,b)$;
        \item discard unrealizable extensions;
        \item for surviving extensions $u=v\circ b$, compute the speculative
        mistake counter
        \[
            k^{\mathrm{tmp}}(u)
            =
            k(v)+\mathbbm{1}[b\neq A(v,x_t)];
        \]
        discard $u$ if $k^{\mathrm{tmp}}(u)>M$;
        \item assign the speculative \Cref{alg:bdpss} weight
        \[
            W_{r_t+1}(u)
            =
            \sum_{j=0}^{M-k^{\mathrm{tmp}}(u)}
            {K-(r_t+1)\choose j},
        \]
        and the speculative Hedge weight
        \[
            w_{r_t+1}(u)
            =
            W_{r_t+1}(u)\exp(-\eta L_{r_t}(v)).
        \]
    \end{enumerate}

    \item The learner predicts using the same distribution as \texttt{ADEPT} would use
    at internal round $r_t+1$:
    \[
        \Pr(\hat y_t=1)
        =
        \frac{
        \sum_{u\in V^{\mathrm{tmp}}_{r_t+1}:u_{r_t+1}=1}
        w_{r_t+1}(u)
        }{
        \sum_{u\in V^{\mathrm{tmp}}_{r_t+1}}
        w_{r_t+1}(u)
        }.
    \]
    This prediction is made before the learner observes whether $t\in I$.

    \item After the prediction, the learner observes $y_t$ and the sampling bit
    $\mathbbm{1}[t\in I]$.

    \item If $t\in I$, the learner commits the tentative ADEPT transition by
    performing Step 4 of Algorithm~2:
    \[
        L_{r_t+1}(u)
        =
        L_{r_t}(\operatorname{parent}(u))
        +
        \mathbbm{1}[u_{r_t+1}\neq y_t]
        \qquad
        \text{for all }u\in V^{\mathrm{tmp}}_{r_t+1}.
    \]
    The committed state is set to
    \[
        V_{r_t+1}=V^{\mathrm{tmp}}_{r_t+1}.
    \]

    \item If $t\notin I$, the learner discards
    $V^{\mathrm{tmp}}_{r_t+1}$, all speculative mistake counters, all
    speculative weights, and any speculative base-learner computations. The
    committed state remains exactly the previous state at internal time $r_t$.
\end{enumerate}

It's clear that the above algorithm is a sample-blind lazy simulation, since it predicts (steps 2 and 3) before determining if $x_t$ was in the sample. Furthermore, if $x_t$ was not in the sample, it reverts its state to what it originally was (step 6). Furthermore, the state of the algorithm on the $K$ steps corresponding to $I$ exactly matches what the state would be if $T = K$ and $I$ is equal to all the instances.

To adapt AULLN \cite{alon2021adversarial} to a sample-blind lazy simulation, for each round, consider the augmented item $z_t = (x_t, y_t, \hat{y}_t) \in \mathcal{X} \times \lrset{0,1} \times \lrset{0,1}$. Define the augmented loss class as $\mathcal{E}_A = \lrset{E_h: h \in \mathcal{C}} \cup \lrset{E_{\mathrm{alg}}}$ where
\[
E_h = \lrset{(x,y,\hat{y}): h(x) \neq y},
E_{\mathrm{alg}} = \lrset{(x,y,\hat{y}): \hat{y} \neq y}
\]

\begin{lemma}[Augmented AULLN]
\label{lem:augmented-aulln}
Given a class $\mathcal{C}$ with Littlestone dimension $\lit$, let $A$ be a sample-blind lazy simulation of a learner with
\[
k = \bigO{\frac{\lit + \log(1/\delta)}{\epsilon^2}}.
\]
rounds. Then, with probability at least $1 - \delta$,

\[
\sup_{E \in \mathcal{E}_A}
\left|
\frac{1}{k} \sum_{t \in S} \mathbbm{1}[z_t \in E] - \frac{1}{T}\sum_{t=1}^T \mathbbm{1}[z_t \in E]
\right|
\leq \epsilon.
\]
\end{lemma}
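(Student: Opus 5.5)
The plan is to reduce directly to the Adversarial ULLN (\Cref{lem:adv-ulln}) applied to the set system $\mathcal{E}_A$ over the augmented domain $\mathcal{Z}=\mathcal{X}\times\{0,1\}\times\{0,1\}$. Two things are needed: a bound $\operatorname{Ldim}(\mathcal{E}_A)=\mathcal{O}(\lit)$, and a check that the stream $z_t=(x_t,y_t,\hat y_t)$ produced by a sample-blind lazy simulation counts as an adversarially generated stream in the sampler/adversary game of \cite{alon2021adversarial}. Taking $k\ge C(\operatorname{Ldim}(\mathcal{E}_A)+\log(1/\delta))/\epsilon^2$ then gives the conclusion immediately.

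For the dimension bound, first handle $\{E_h:h\in\mathcal{C}\}$. The indicator $\mathbbm{1}_{E_h}(x,y,\hat y)=h(x)\oplus y$ is obtained from $h$ by a fixed relabeling that depends only on the coordinate $y$. Take any Littlestone tree shattered by this class. Replace each node $(x,y,\hat y)$ by $x$, and when $y=1$ swap the two subtrees. This produces a tree of the same depth shattered by $\mathcal{C}$, so the Littlestone dimension is at most $\lit$. Adding the single set $E_{\mathrm{alg}}$ increases the Littlestone dimension by at most $1$. To see this, take a shattered tree of depth $d$. At the root, at least one of the two subtrees is shattered without using $E_{\mathrm{alg}}$, because the hypothesis $\mathbbm{1}_{E_{\mathrm{alg}}}$ labels the root one way only. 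That subtree has depth $d-1$ and is shattered by the class without $E_{\mathrm{alg}}$, so $d-1\le\lit$. Hence $\operatorname{Ldim}(\mathcal{E}_A)\le\lit+1$, which is absorbed into the $\mathcal{O}(\cdot)$.

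The stream-validity step is where I expect the real difficulty: $\hat y_t$ is random and depends on the learner's committed state, which in turn depends on $I$. In the AULLN game, though, the adversary sees all past sampling decisions and may choose $z_t$ as any (randomized) function of the history. Condition 1 of \Cref{def:sample-blind-lazy} ensures that $\hat y_t$ is a function of $x_t$, the committed state, and fresh internal coins. Condition 2 ensures that the committed state is determined by past sampled rounds, i.e.\ by $\{\mathbbm{1}[s\in I]\}_{s<t}$ and $z_{1:t-1}$. The environment's adaptive adversary likewise sees only $H_{t-1}$ and chooses $(x_t,y_t)$ without knowledge of $\mathbbm{1}[t\in I]$.

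So the composite ``adversary'' (environment together with learner) generates $z_t$ from information available in the AULLN game before the sampler's round-$t$ decision, and $z_t$ is independent of that decision. This is exactly the protocol in which \Cref{lem:adv-ulln} holds. I would argue it by fixing the learner's coins, or by folding them into the adversary's randomization, since the lemma allows randomized adaptive adversaries. One subtlety remains: the AULLN statement uses a uniform sample of fixed size $k$ (reservoir/uniform-without-replacement sampling). That is the sampler used here, since $I$ is a uniformly random $k$-subset. Its decisions are revealed sequentially, and the round-$t$ decision is not revealed before $z_t$ is fixed. With these two ingredients, \Cref{lem:adv-ulln} yields the $\epsilon$-approximation over $\mathcal{E}_A$ with probability $1-\delta$, which is the statement.
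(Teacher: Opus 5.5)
Your proposal is correct and follows the paper's proof. Like the paper, you bound $\operatorname{Ldim}(\mathcal{E}_A)\le \lit+1$ in two steps: the $y$-flip (subtree-swap) relabeling for $\{E_h\}$, and the root argument for the single extra set $E_{\mathrm{alg}}$. You then invoke \Cref{lem:adv-ulln} with the $+1$ absorbed into the constant. Your explicit check that sample-blindness makes $z_t=(x_t,y_t,\hat y_t)$ a legitimate adaptively generated stream (fixed before, and conditionally independent of, the round-$t$ sampling decision) is something the paper leaves implicit in ``directly applying'' and only asserts in the proof of \Cref{thm:pareto}, so it adds rigor without changing the argument.
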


\begin{proof}
First, we show that $\mathcal{E}_A$ has Littlestone dimension at most $\lit + 1$. First, the subclass $\lrset{E_h: h \in \mathcal{C}}$ has Littlestone dimension equal to that of $\mathcal{C}$ -- the $y$-coordinate only flips the branch labels at a node, and the $\hat{y}$-coordinate is ignored. Adding the single concept $E_{\mathrm{alg}}$ can only increase the Littlestone dimension by $1$ (otherwise, there's a shattered tree with depth $\lit + 2$, implying that $\mathcal{C}$ has a shattered tree with depth $\lit + 1$).

Directly applying \Cref{lem:adv-ulln} gives that an $\epsilon$-approximation of $\mathcal{E}_A$ holds with probability at least $1 - \delta$, so the result follows.
\end{proof}

\begin{lemma}[Regret Transfer]
\label{lem:regret-transfer}
Let $A$ be a sample-blind lazy simulation of a $k$-round learner $B$, and let $I \subseteq [T]$ be the sampled index set. Suppose $I$ is an $\epsilon$-approximation for the augmented class $\mathcal{E}_A$ from \Cref{lem:augmented-aulln}. Then
\[
\frac{1}{T} \mathrm{Reg}_T(A) \leq \frac{1}{k} \mathrm{Reg}_I(A) + 2 \epsilon
\]
where
\[
\mathrm{Reg}_T(A) = \sum_{t=1}^T \mathbbm{1}[\hat{y}_t \neq y_t]
-
\inf_{h \in \mathcal{C}} \sum_{t=1}^T \mathbbm{1}[h(x_t) \neq y_t]
\text{, and}
\]
\[
\mathrm{Reg}_I(A) = \sum_{t \in I} \mathbbm{1}[\hat{y}_t \neq y_t]
-
\inf_{h \in \mathcal{C}} \sum_{t \in I} \mathbbm{1}[h(x_t) \neq y_t].
\]
\end{lemma}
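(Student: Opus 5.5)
We prove the Regret Transfer lemma (\Cref{lem:regret-transfer}) by splitting the regret into the learner's loss and the comparator's loss. We control each part separately using the $\epsilon$-approximation property for the two kinds of sets in $\mathcal{E}_A$.

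For $t\in[T]$, write the augmented items $z_t=(x_t,y_t,\hat y_t)$, and define the empirical frequencies
\[
    \mathrm{err}_I(E)=\frac1k\sum_{t\in I}\mathbbm{1}[z_t\in E],
    \qquad
    \mathrm{err}_T(E)=\frac1T\sum_{t=1}^T\mathbbm{1}[z_t\in E].
\]

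\textbf{Step 1: the learner's loss.} Apply the $\epsilon$-approximation property to the single set $E_{\mathrm{alg}}\in\mathcal{E}_A$. Since $\mathbbm{1}[z_t\in E_{\mathrm{alg}}]=\mathbbm{1}[\hat y_t\neq y_t]$, this gives
\[
    \frac1T\sum_{t=1}^T\mathbbm{1}[\hat y_t\neq y_t]
    \le
    \frac1k\sum_{t\in I}\mathbbm{1}[\hat y_t\neq y_t]+\epsilon .
\]
This is exactly why $E_{\mathrm{alg}}$ was put into the augmented class. The $\hat y_t$ on non-sampled rounds are produced by the same learner, so $z_t$ is a legitimate adversarially generated stream item. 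The sample-blindness of $A$ ensures that $I$ is still a uniform sample relative to this stream, as required by \Cref{lem:augmented-aulln}; here the lemma only takes the approximation property as a hypothesis, so nothing further is needed.

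\textbf{Step 2: the comparator's loss.} For every $h\in\mathcal{C}$ we have $\mathbbm{1}[z_t\in E_h]=\mathbbm{1}[h(x_t)\neq y_t]$. The approximation property is uniform over $\mathcal{E}_A$, so for every $h$ simultaneously
\[
    \frac1k\sum_{t\in I}\mathbbm{1}[h(x_t)\neq y_t]
    \le
    \frac1T\sum_{t=1}^T\mathbbm{1}[h(x_t)\neq y_t]+\epsilon .
\]
Take the infimum over $h$ on both sides; since the bound holds pointwise in $h$, the inequality is preserved. This gives
\[
    \frac1k\inf_{h}\sum_{t\in I}\mathbbm{1}[h(x_t)\neq y_t]
    \le
    \frac1T\inf_h\sum_{t=1}^T\mathbbm{1}[h(x_t)\neq y_t]+\epsilon,
\]
which is equivalent to
\[
    -\frac1T\inf_h\sum_{t=1}^T\mathbbm{1}[h(x_t)\neq y_t]
    \le
    -\frac1k\inf_h\sum_{t\in I}\mathbbm{1}[h(x_t)\neq y_t]+\epsilon .
\]
The uniformity of the approximation is what makes passing to the infimum free: we never need the minimizers on $I$ and on $[T]$ to coincide.

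\textbf{Step 3: combine.} Adding the inequalities from Steps 1 and 2 yields $\frac1T\mathrm{Reg}_T(A)\le\frac1k\mathrm{Reg}_I(A)+2\epsilon$. There is no real obstacle here. The only delicate point is the direction of each inequality: the learner term needs the bound from the $T$-average to the $I$-average, while the comparator term needs the reverse direction. Both follow from the absolute value in \Cref{def:eps-approx}.
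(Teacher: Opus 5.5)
Your proof is correct and follows essentially the same route as the paper. You split the normalized regret into the learner term, controlled by the $\epsilon$-approximation on $E_{\mathrm{alg}}$, and the comparator term, controlled uniformly over the sets $E_h$ and then passed through the infimum, picking up $\epsilon$ from each; the paper does the same thing in a single chain of inequalities, where its step $-\inf_h\bigl(\frac1k L_I(E_h)-\epsilon\bigr)$ is exactly your Step 2.
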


\begin{proof}
First define $L_T(E) = \sum_{t=1}^T \mathbbm{1}[z_t \in E], L_I(E) = \sum_{t \in I} \mathbbm{1}[z_t \in E]$

Then, it holds that
\begin{align*}
\frac{1}{T} \mathrm{Reg}_T(A)
&=
\sum_{t=1}^T \mathbbm{1}[\hat{y}_t \neq y_t]
-
\inf_{h \in \mathcal{C}} \sum_{t=1}^T \mathbbm{1}[h(x_t) \neq y_t] \\
&=
\frac{1}{T} L_T(E_{\mathrm{alg}})
-
\inf_{h \in \mathcal{C}} \frac{1}{T} L_T(E_h) \\
&\leq
\lr{\frac{1}{k} L_I(E_{\mathrm{alg}})+\epsilon}
-
\inf_{h \in \mathcal{C}}
\lr{\frac{1}{k} L_I(E_h) - \epsilon} \\
&=
\frac{1}{k} L_I(E_{\mathrm{alg}})
-
\inf_{h \in \mathcal{C}} \frac{1}{k} L_I(E_h) + 2 \epsilon \\
&=
\frac{1}{k} \mathrm{Reg}_I(A) + 2 \epsilon.
\end{align*}
\end{proof}

\begin{proof}[of \Cref{thm:pareto}]

Fix $c\in(0,1)$ and set
\[
    K=\lfloor T^c \rfloor .
\]
Run the sample-blind lazy rollback simulation described above, using a uniformly random sampled set
$I\subseteq[T]$ of size $K$, and use the $K$-round version of \texttt{ADEPT} as the internal learner. Let $M$ denote the realizable mistake bound of the base learner used inside \texttt{ADEPT}. Since the simulation is sample-blind, \Cref{lem:augmented-aulln} applies to the augmented stream
\[
    z_t=(x_t,y_t,\hat y_t)
\]
even against an adaptive adversary.

Choose $\delta=(eT)^{-1}$ and
\[
    \epsilon
    =
    C_0\sqrt{\frac{\lit+\log(eT)}{K}},
\]
for a sufficiently large universal constant $C_0$. If $\epsilon\ge 1$, then the desired regret bound follows from the trivial bound $\mathrm{Reg}_T(A)\le T$, after increasing the implicit constant. Hence assume $\epsilon<1$.

Let $G$ be the event that the sampled subsequence is an $\epsilon$-approximation for the augmented class $\mathcal{E}_A$. By \Cref{lem:augmented-aulln}, $\Pr(G)\ge 1-\delta$. On $G$, \Cref{lem:regret-transfer} gives
\[
    \mathrm{Reg}_T(A)
    \le
    \frac{T}{K}\mathrm{Reg}_I(A)+2\epsilon T .
\]
On the sampled rounds, the lazy simulation coincides exactly with running \texttt{ADEPT} on the sampled subsequence. Therefore, by \Cref{thm:main} with horizon $K$,
\[
    \mathbb{E}\big[\mathrm{Reg}_I(A)\big]
    \le
    \bigO{\sqrt{K M \log(eK)}} .
\]
Using the preceding display on $G$, the trivial upper bound
$\mathrm{Reg}_T(A)\le T$ on $G^c$, and $|\mathrm{Reg}_I(A)|\le K$, we get
\begin{align*}
    \mathbb{E}[\mathrm{Reg}_T(A)]
    &\le
    \frac{T}{K}\mathbb{E}\big[\mathrm{Reg}_I(A)\big]
    +2\epsilon T
    +\bigO{\delta T} \\
    &\le
    \bigO{
        T\sqrt{\frac{M\log(eK)}{K}}
    }
    +
    \bigO{
        T\sqrt{\frac{\lit+\log(eT)}{K}}
    }
    +
    \bigO{\delta T} \\
    &\le
    \bigO{
        T\sqrt{\frac{\lit+(M+1)\log(eT)}{K}}
    } \\
    &\le
    \bigO{
        T^{1-c/2}
        \sqrt{\lit+(M+1)\log(eT)}
    },
\end{align*}
where we used $K\asymp T^c$, $K\le T$, and $\delta T\le 1$.

It remains to bound the number of weak-consistency oracle calls. At any external round, the committed internal history has length at most $K$. Hence the active set used in the speculative \texttt{ADEPT} step consists only of realizable labelings of at most $K$ points, and Sauer's Lemma gives
\[
    |V|\le \sum_{i=0}^{\vc}\binom{K}{i}
    =
    \bigO{K^{\vc}} .
\]
Each active prefix produces two candidate extensions and therefore requires only a constant number of weak-consistency queries, assuming the internal realizable learner has no more than constant additional weak-consistency query cost per prediction. Thus each external round uses $\bigO{K^{\vc}}$ weak-consistency oracle calls, and the total number of calls is
\[
    \bigO{T K^{\vc}}
    \le
    \bigO{T(T^c)^{\vc}}
    =
    \bigO{T^{1+c\vc}} .
\]
This proves the claimed oracle--regret tradeoff.

\end{proof}

\section{Proof of \Cref{thm:const-lower-bound}}
\label{app:lower-bound}

It is useful to interpret \Cref{thm:const-lower-bound} as evidence for a
separation between the dimensions governing computation and regret. The lower
bound construction has $\vc=\lit=d$, but the parameter
that appears on the query-complexity side is naturally viewed as the VC
dimension: in the regime $d_{\mathrm{VC}}\le Q+1$ and $Q = o(\sqrt{\vc T})$, the theorem gives
\[
    \mathbb{E}[\mathrm{Reg}_T]
    \ge
    \Omega\!\left(\frac{\lit T}{Q}\right).
\]
Thus, in the low-query regime, the regret forced by a restricted oracle budget
grows with the number of VC degrees of freedom that the learner must distinguish
through queries. This is consistent with our upper bounds: the regret term is
controlled by the realizable mistake bound, and in particular by the
Littlestone dimension when the base learner is SOA, yielding
$\widetilde O(\sqrt{T\,\lit})$ regret; whereas the number
of oracle-verified realizable prefixes maintained by \texttt{ADEPT} is
controlled by Sauer's lemma and hence by $d_{\mathrm{VC}}$, leading to oracle
complexity polynomial in $T$ with exponent depending on $d_{\mathrm{VC}}$.
Informally, Littlestone dimension governs the statistical price of adversarial
online prediction, while VC dimension governs the oracle/computational price.

\begin{proof}

We formally prove the lower bound for $d = 1$, i.e.
\[
    \mathbb{E}[\mathrm{Reg}_T]
    \ge
    \frac{T}{2(Q+1)} - \frac{Q}{2}.
\]

Let the instance space be partitioned into infinitely many disjoint infinite blocks
\[
    B_1,B_2,\ldots .
\]
For every $j \ge 1$, define the block singleton $h_j(x) = \mathbbm{1}[x \in B_j]$,
and let $\mathcal{C} = \{h_j : j \ge 1\}$.
This class has Littlestone dimension equal to $1$.

The adversary proceeds in phases $0,\ldots,Q$. In phase $i$, the adversary uses the two blocks $B_{2i+1}$ and $B_{2i+2}$ and labels examples according to the hypothesis $h_{2i+1}$. That is, on each round of phase $i$,
the adversary independently chooses a fresh point from $B_{2i+1}$ with probability $1/2$, in which
case the label is $1$, and chooses a fresh point from $B_{2i+2}$ with probability $1/2$, in which
case the label is $0$.

The phase changes only when the learner makes an oracle query. More precisely, the adversary
starts in phase $0$. Whenever the learner makes at least one oracle query on a round, the adversary
ends the current phase after that round and moves to the next phase. Since the learner makes at most
$Q$ queries, the adversary uses at most $Q+1$ phases.

We may assume that every queried round causes zero loss for the learner. This only makes the
learner stronger and therefore can only weaken the lower bound. On a round in which the learner
does not query, the current point is fresh, and the learner has no oracle information distinguishing
whether it lies in the positive block $B_{2i+1}$ or the negative block $B_{2i+2}$ of the current phase.
Thus, conditioned on the learner's transcript before prediction, the label on any non-query round is
an unbiased bit. Hence the learner's expected loss on every non-query round is $1/2$.

Let $R$ be the number of rounds on which the learner makes at least one query. Since the total
number of queries is at most $Q$, we have $R \le Q$. Therefore the learner's expected loss satisfies
\[
    \mathbb{E}[L_{\mathrm{alg}}]
    \ge
    \frac{T-\mathbb{E}[R]}{2}
    \ge
    \frac{T-Q}{2}.
\]

It remains to upper bound the loss of the best expert in hindsight. For each phase
$i \in \{0,\ldots,Q\}$, let $P_i$ denote the number of positive examples in phase $i$, namely the
number of examples drawn from $B_{2i+1}$. Let
\[
    P = \sum_{i=0}^Q P_i
\]
be the total number of positive examples over the entire sequence. Since every round is positive
with probability $1/2$, regardless of the adaptively chosen phase lengths,
\[
    \mathbb{E}[P] = \frac{T}{2}.
\]

Now fix a phase $i$. The expert $h_{2i+1}$ is correct on every example in phase $i$, because the
labels in that phase are generated exactly according to $h_{2i+1}$. On every other phase, all examples
lie outside $B_{2i+1}$, so $h_{2i+1}$ predicts $0$. Therefore, outside phase $i$, this expert makes
mistakes exactly on the positive examples. Its total loss is
\[
    P - P_i.
\]
Thus the loss of the best expert in hindsight is at most
\[
    \min_{i \in \{0,\ldots,Q\}} (P-P_i)
    =
    P - \max_{i \in \{0,\ldots,Q\}} P_i.
\]
Using the deterministic inequality
\[
    \max_{i \in \{0,\ldots,Q\}} P_i
    \ge
    \frac{1}{Q+1}\sum_{i=0}^Q P_i
    =
    \frac{P}{Q+1},
\]
we get
\[
    \inf_{h \in \mathcal{C}}
    \sum_{t=1}^T \mathbbm{1}[h(x_t) \neq y_t]
    \le
    P - \frac{P}{Q+1}
    =
    \frac{Q}{Q+1}P.
\]
Taking expectations gives
\[
    \mathbb{E}\left[
    \inf_{h \in \mathcal{C}}
    \sum_{t=1}^T \mathbbm{1}[h(x_t) \neq y_t]
    \right]
    \le
    \frac{Q}{Q+1}\mathbb{E}[P]
    =
    \frac{Q}{Q+1}\cdot \frac{T}{2}.
\]

Combining the learner lower bound with the comparator upper bound,
\[
\begin{aligned}
    \mathbb{E}[\mathrm{Reg}_T]
    &=
    \mathbb{E}[L_{\mathrm{alg}}]
    -
    \mathbb{E}\left[
    \inf_{h \in \mathcal{C}}
    \sum_{t=1}^T \mathbbm{1}[h(x_t) \neq y_t]
    \right] \\
    &\ge
    \frac{T-Q}{2}
    -
    \frac{Q}{Q+1}\cdot \frac{T}{2} \\
    &=
    \frac{T}{2(Q+1)} - \frac{Q}{2}.
\end{aligned}
\]
as desired.

To extend the argument to general $d$, replace the block-singleton class by the
class of unions of at most $d$ blocks:
\[
    \mathcal C_d
    =
    \left\{
        x\mapsto \mathbbm 1\left[x\in \bigcup_{j\in S}B_j\right]
        : S\subseteq \mathbb N,\ |S|\le d
    \right\}.
\]
This class has VC dimension and Littlestone dimension exactly $d$. The adversary
uses the same phase construction as above, so the learner's loss lower bound
remains $\mathbb E[L_{\mathrm{alg}}]\ge (T-Q)/2$. The only change is the
hindsight comparator: a concept in $\mathcal C_d$ can now select the positive
blocks from the $d$ phases with the largest positive mass. If $m\le Q+1$ is the
number of phases and $P_i$ is the number of positive examples in phase $i$, then (defining $\mathrm{Top}_d$ as the indices of the $d$ largest values in $P$)
\[
    \sum_{i\in \mathrm{Top}_d} P_i
    \ge
    \min\left\{1,\frac{d}{m}\right\}P
    \ge
    \min\left\{1,\frac{d}{Q+1}\right\}P,
\]
where $P=\sum_i P_i$. Hence the best comparator has expected loss at most
\[
    \left(1-\min\left\{1,\frac{d}{Q+1}\right\}\right)\frac{T}{2}.
\]
Combining this with the learner lower bound gives
\[
    \mathbb E[\mathrm{Reg}_T]
    \ge
    \frac{T}{2}\min\left\{1,\frac{d}{Q+1}\right\}
    -
    \frac{Q}{2},
\]
as claimed.

\end{proof}


\newpage

\end{document}